\documentclass{article}

\usepackage{xcolor}
\usepackage{latexcolors}
\usepackage{enumerate, amsmath, amsthm, amssymb, algorithm, algorithmic, pifont, fullpage, csquotes, dashrule, tikz, bbm, booktabs, bm, verbatim, url, mathrsfs}
\usepackage{dsfont}
\usepackage[framemethod=TikZ]{mdframed}
\usepackage{natbib}
\definecolor{stanfordred}{rgb}{0.54901961, 0.08235294, 0.08235294}
\usepackage[colorlinks,backref=page]{hyperref}
\hypersetup{
     colorlinks = true,
     linkcolor = blue,
     anchorcolor = blue,
     citecolor = stanfordred,
     filecolor = blue,
     urlcolor = blue
     }
\usepackage{caption}
\usepackage{subcaption}
\usepackage{authblk}
\usepackage{epigraph}
\setlength{\epigraphwidth}{0.5\textwidth}

\hbadness=10000 


\definecolor{dblue}{RGB}{98, 140, 190}
\definecolor{dlblue}{RGB}{216, 235, 255}
\definecolor{dgreen}{RGB}{124, 155, 127}
\definecolor{dpink}{RGB}{207, 166, 208}
\definecolor{dyellow}{RGB}{255, 248, 199}
\definecolor{dgray}{RGB}{46, 49, 49}

\newcommand{\durl}[1]{\textcolor{dblue}{\underline{\url{#1}}}}





\newcommand{\ubr}[1]{\underbrace{#1}}

\newcommand{\eps}{\varepsilon}



\newcommand{\mc}[1]{\mathcal{#1}}

\newcommand{\bE}{\mathbb{E}}

\newcommand{\bR}{\mathbb{R}}
\newcommand{\bP}{\mathbb{P}}

\newcommand{\bI}{\mathbb{I}}
\newcommand{\bH}{\mathbb{H}}
\newcommand{\bN}{\mathbb{N}}

\newcommand{\kl}[2]{D_{\mathrm{KL}}(#1 \mid\mid #2)}

\newcommand{\expect}[1]{\mathbb{E}\left[#1\right]}

\newcommand{\ra}{\rightarrow}


\DeclareMathOperator*{\argmax}{arg\,max}





\newmdenv[
  topline=false,
  bottomline=false,
  rightline = false,
  leftmargin=10pt,
  rightmargin=0pt,
  innertopmargin=0pt,
  innerbottommargin=0pt
]{innerproof}


\newcounter{DaveDefCounter}
\setcounter{DaveDefCounter}{1}




\newtheorem{corollary}{Corollary}

\newtheorem{lemma}{Lemma}

\newtheorem{theorem}{Theorem}
\newtheorem{fact}{Fact}

\newif\ifsubmit
\submitfalse
\ifsubmit
\newcommand{\dnote}[1]{}
\newcommand{\mnote}[1]{}
\newcommand{\ndg}[1]{}
\newcommand{\bnote}[1]{}
\else
\newcommand{\dnote}[1]{\textcolor{blue}{Dilip: #1}}
\newcommand{\mnote}[1]{\textcolor{violet}{Mark: #1}}
\newcommand{\ndg}[1]{\textcolor{green}{Noah: #1}}
\newcommand{\bnote}
[1]{\textcolor{orange}{Ben: #1}}
\fi

\title{Bayesian Reinforcement Learning with Limited Cognitive Load}

%

\author[1]{Dilip Arumugam$^\text{\textsterling}$\thanks{\texttt{dilip@cs.stanford.edu}}}
\author[2]{Mark K. Ho$^\text{\textsterling}$\thanks{\texttt{mkh260@nyu.edu}}}
\author[3,1]{Noah D. Goodman\thanks{\texttt{ngoodman@stanford.edu}}}
\author[4,5]{Benjamin Van Roy\thanks{\texttt{bvr@stanford.edu}}}
\affil[1]{Department of Computer Science, Stanford University}
\affil[2]{Center for Data Science, New York University}
\affil[3]{Department of Psychology, Stanford University}
\affil[4]{Department of Electrical Engineering, Stanford University}
\affil[5]{Department of Management Science \& Engineering, Stanford University}
\date{}

\begin{document}
\def\thefootnote{\textsterling}\footnotetext{Equal contribution}
\maketitle

\begin{abstract}
All biological and artificial agents must learn and make decisions given limits on their ability to process information. As such, a general theory of adaptive behavior should be able to account for the complex interactions between an agent's learning history, decisions, and capacity constraints. Recent work in computer science has begun to clarify the principles that shape these dynamics by bridging ideas from \emph{reinforcement learning}, \emph{Bayesian decision-making}, and \emph{rate-distortion theory}. This body of work provides an account of \emph{capacity-limited Bayesian reinforcement learning}, a unifying normative framework for modeling the effect of processing constraints on learning and action selection. Here, we provide an accessible review of recent algorithms and theoretical results in this setting, paying special attention to how these ideas can be applied to studying questions in the cognitive and behavioral sciences.

\textbf{Keywords: Bayesian decision making, Efficient exploration, Reinforcement learning, Multi-armed bandits, Information theory, Rate-distortion theory}

\end{abstract}

\section{Introduction}

Cognitive science aims to identify the principles and mechanisms that underlie adaptive behavior. An important part of this endeavor is the development of unifying, normative theories that specify ``design principles'' that guide or constrain how intelligent systems respond to their environment~\citep{marr1982vision,anderson1990adaptive,lewis2014computational,griffiths2015rational,gershman2015computational}. For example, accounts of learning, cognition, and decision-making often posit a function that an organism is optimizing---\textit{e.g.}, maximizing long-term reward or minimizing prediction error---and test plausible algorithms that achieve this---\textit{e.g.}, a particular learning rule or inference process. Historically, normative theories in cognitive science have been developed in tandem with new formal approaches in computer science and statistics. This partnership has been fruitful even given differences in scientific goals (\textit{e.g.}, engineering artificial intelligence versus \emph{reverse}-engineering biological intelligence). Normative theories play a key role in facilitating cross-talk between different disciplines by providing a shared set of mathematical, analytical, and conceptual tools for describing computational problems and how to solve them~\citep{ho2022cognitive}.

This paper is written in the spirit of such cross-disciplinary fertilization. Here, we review recent work in computer science~\citep{arumugam2021deciding,arumugam2022deciding} that develops a novel approach for unifying three distinct mathematical frameworks that will be familiar to many cognitive scientists (Figure~\ref{fig:architectures}). The first is \emph{Bayesian inference}, which has been used to study a variety of perceptual and higher-order cognitive processes such as categorization, causal reasoning, and social reasoning in terms of inference over probabilistic representations~\citep{yuille2006vision,baker2009action,tenenbaum2011grow,battaglia2013simulation,collins2013cognitive}. The second is \emph{reinforcement learning}~\citep{sutton1998introduction}, which has been used to model key phenomena in learning and decision-making including habitual versus goal-directed choice as well as trade-offs between exploring and exploiting~\citep{Daw2012,dayan2008reinforcement,radulescu2019holistic,wilson2014humans}. The third is \emph{rate-distortion theory}~\citep{shannon1959coding,berger1971rate}, a subfield of information theory~\citep{shannon1948mathematical,cover2012elements}, which in recent years has been used to model the influence of capacity-limitations in perceptual and choice processes~\citep{sims2016rate,lai2021policy,zenon2019information,zaslavsky2021rate}. All three of these formalisms have been used as normative frameworks in the sense discussed above: They provide general design principles (\textit{e.g.}, rational inference, reward-maximization, efficient coding) that explain the function of observed behavior and constrain the investigation of underlying mechanisms.

Although these formalisms have been applied to analyzing individual psychological processes, less work has used them to study learning, decision-making, and capacity limitations holistically. One reason is the lack of principled modeling tools that comprehensively integrate these multiple normative considerations. The framework of \emph{capacity-limited Bayesian reinforcement learning}, originally developed by~\cite{arumugam2021deciding,arumugam2022deciding} in the context of machine learning, directly addresses the question of how to combine these perspectives. Our goal is to review this work and present its key developments in a way that will be accessible to the broader research community and can pave the way for future cross-disciplinary investigations. 

We present the framework in two parts. First, we discuss a formalization of capacity-limited Bayesian \emph{decision-making} that introduces an \emph{information bottleneck} between an agent's beliefs about the world and its actions. This motivates a novel family of algorithms for identifying decision-rules that optimally trade off reward and information. Through a series of simple toy simulations, we analyze a specific algorithm: a variant of Thompson Sampling~\citep{thompson1933likelihood} that incorporates such an information bottleneck. Afterwards, we turn more fully to capacity-limited Bayesian \emph{reinforcement learning}, in which a decision-maker is continuously interacting with and adapting to their environment. We report both novel simulations and previously-established theoretical results in several learning settings, including multi-armed bandits as well as continual and episodic reinforcement learning. One feature of this framework is that it provides tools for analyzing how the interaction between capacity-limitations and learning dynamics can influence learning outcomes. In the discussion, we explore how such analyses and our framework can be applied to questions in cognitive science. We also discuss similarities and differences between capacity-limited Bayesian reinforcement learning and existing proposals, including information-theoretic bounded rationality~\citep{ortega2011information,gottwald2019bounded}, policy compression~\citep{lai2021policy}, and resource-rational models based on principles separate from information theory~\citep{lieder2014algorithm,callaway2022rational,ho2022people}.


\begin{figure}[t]
\centering
\includegraphics[width=\linewidth]{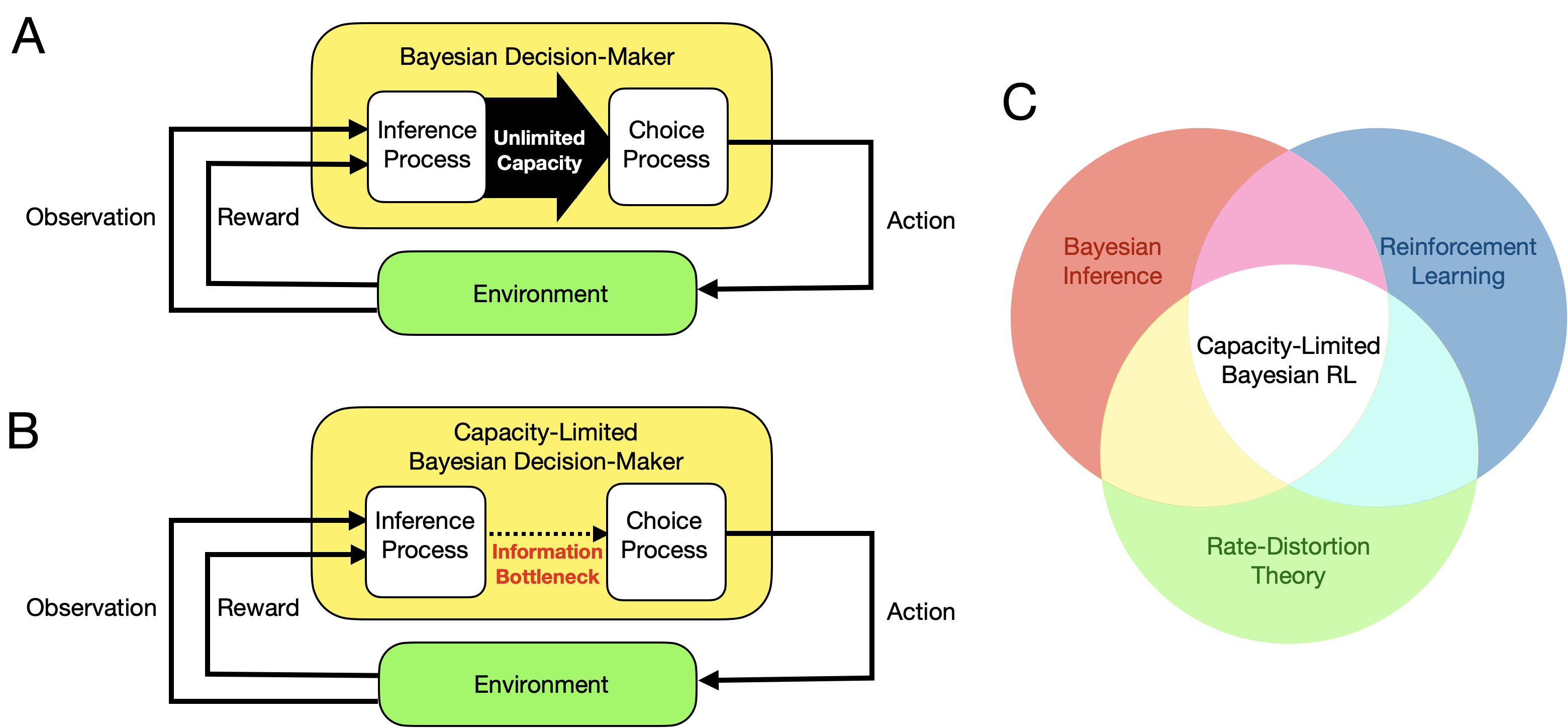}
\caption{(A) Bayesian learning and decision-making is typically modularized into distinct stages of \emph{inference} and \emph{choice}. That is, the decision-maker is conceptualized as mapping experiences to probabilistic beliefs about the environment (an inference process) and then performing computations based on the resulting beliefs to produce distributions over actions (a choice process). Inference and choice processes are usually specified independently and assume that the channel from one to the other has unlimited capacity (thick solid arrow). (B) In \emph{capacity-limited Bayesian decision-making}, there exists an information bottleneck between inferences and choices (narrow dotted arrow). Given the results of a fixed inference process (\textit{e.g.}, exact or approximate Bayesian inference), the optimal choice process trades off expected rewards and the mutual information (the \emph{rate}) between beliefs about the environment and the distribution over desirable actions. (C) Capacity-limited Bayesian reinforcement learning integrates ideas from \emph{Bayesian inference}~\citep{jaynes2003probability}, \emph{reinforcement learning}~\citep{kaelbling1996reinforcement}, and \emph{rate-distortion theory}~\citep{cover2012elements}.}
\label{fig:architectures}
\end{figure}

\section{Capacity-Limited Bayesian Decision-Making}

This section provides a review of Bayesian models before introducing a general account of \emph{capacity-limited Bayesian decision-making}. We then discuss and analyze a practical algorithm for computing capacity-limited Bayesian decision procedures based on Thompson Sampling.

\subsection{Bayesian Inference and Decision-Making}

Bayesian or probabilistic models have been used to characterize a range of psychological phenomena, including perception, categorization, feature learning, causal reasoning, social interaction, and motor control~\citep{kording2004bayesian,itti2009bayesian,ma2012organizing,goodman2016pragmatic}. One distinguishing feature of Bayesian models is that they separate learning and decision-making into two stages: \emph{inferring} a parameter of the environment and \emph{choosing} an action based on those inferences (Figure~\ref{fig:architectures}A).

Inference is formalized in terms of an \emph{environment-estimator}, a probability distribution over the unknown environment $\mc{E}$ that is updated based on the experiences of the agent. Formally, given a history of experiences $H_t$ up to time $t$, an environment-estimator $\eta_t$ is updated according to Bayes rule: 
\begin{equation}
\eta_t(\mc{E}) = \bP(\mc{E} \mid H_t) \propto \bP(H_t \mid \mc{E}) \bP(\mc{E}),
\label{eq:Bayesrule}
\end{equation}
where $\bP(H_t \mid \mc{E})$ is the likelihood and $\bP(\mc{E})$ is the prior probability assigned to $\mc{E}$.  Note that the environment-estimator $\eta_t$ takes the form of a probability mass function over environments.

Choice is formalized as a \emph{decision-rule}, which bases the selection of actions on the results of the inference process (\textit{e.g.}, Bayesian inference). Concretely, a decision-rule $\delta$ lies internal to the agent and is a probability mass function over actions given the identity of the environment $\mc{E}$. That is, if at timestep $t$, the agent samples a plausible environment $\theta \sim \eta_t$, then $\delta(A = a \mid \mc{E} = \theta)$ is the probability that any action $a \in \mc{A}$ is a desirable decision for the environment $\theta$. Given an environment-estimator $\eta_t$ and decision-rule $\delta$, we can then define the joint distribution
\begin{equation}
p_t(A, \mc{E}) \triangleq \bP(A, \mc{E} \mid H_t) = \delta(A \mid \mc{E})\eta_t(\mc{E}).
\label{eq:ea_channel}
\end{equation}
Finally, suppose we have a real-valued utility function $U(e, a)$ that defines the utility of an action $a$ for a particular version of the environment $e$ (later we discuss reinforcement learning and will consider specific utility functions that represent reward and/or value). Then the utility of an environment-estimator and decision-rule pair is the expected utility of the joint distribution they induce: $\mc{U}(\eta, \delta) = \mathbb{E}_{p_t(A, \mc{E})}[U(\mc{E}, A)]$.

This separation of inference and choice into an independent Bayesian estimator and decision-rule is commonly assumed throughout psychology, economics, and computer science~\citep{vonNeumannMorgenstern47,kaelbling1998planning,ma2019bayesian}. However, even if inference about the environment is exact, discerning what decisions are desirable from it incurs some non-trivial degree of cognitive load and the associated cost or limit on how much those inferences can inform choices remains unaccounted for. To remedy this, \cite{arumugam2021deciding,arumugam2022deciding} developed a framework for Bayesian learning and decision-making given an information bottleneck between inference and choice (Figure~\ref{fig:architectures}B). We now turn to how to extend the standard Bayesian framework to incorporate such capacity limitations.

\subsection{Choice with Capacity Limitations}

In \emph{capacity-limited Bayesian decision-making}, we make two modifications to the standard formulation. First, rather than pre-specifying a fixed decision-rule, we allow for the decision-rule $\delta_t$ to be chosen based on the current environment-estimator $\eta_t$; intuitively, this allows for a valuation of which decisions are desirable based on the agent's current knowledge of the world, $\eta_t$. Second, rather than allowing arbitrarily complex dependencies between environment estimates and actions, we can view the decision-rule $\delta_t$ as an \emph{estimate-to-action channel} that has limited capacity. We can formulate capacity limitations in a general way by considering the \emph{mutual information} or \emph{rate} of the estimate-to-action channel\footnote{
For a joint distribution $p(X, Y)$ the mutual information between random variables $X$ and $Y$ is:
$$
\mathbb{I}(X, Y) = \sum_{x, y} p(X = x, Y = y) \ln\left(\frac{p(X = x, Y = y)}{p(X = x)p(Y = y)}\right),
$$
where $p(X = \cdot)$ and $p(Y = \cdot)$ are the marginal distributions for $X$ and $Y$, respectively. Intuitively, the mutual information captures the degree to which two random variables are ``coupled''. For example, if one random variable is a bijective function of the other (i.e., there is a deterministic, one-to-one correspondence between realizations of $X$ and $Y$) then the mutual information will be a large positive number; conversely, if $X$ and $Y$ are completely independent of one another, then the mutual information is 0.
}~\citep{cover2012elements}. The notion of rate comes from rate-distortion theory, a sub-field of information theory that studies how to design efficient but lossy coding schemes~\citep{shannon1959coding,berger1971rate}. In particular, the rate of any channel quantifies the number of bits transmitted or communicated on average per data sample; in our context, this gives a precise mathematical form for how much decisions (channel outputs) are impacted by environment beliefs (channel inputs). Intuitively, the rate resulting from a decision rule captures the amount of \emph{coupling} between a decision-maker's estimates of the environment and actions taken. The central assumption of this framework is that greater estimate-to-action coupling is more cognitively costly.

Thus, formally, an optimal agent would use a decision-rule (estimate-to-action channel) that \emph{both maximizes utility and minimizes rate}. If we additionally assume that the environment-estimator $\eta_t$ is fixed and exact as Equation~\ref{eq:Bayesrule} (in Section \ref{sec:disc}, we consider relaxing this assumption), then the optimal capacity-limited decision-rule at time $t$ is given by:
\begin{equation}
\delta^\star_t = \argmax_{\delta_t} \bigg\{
\mc{U}(\eta_t, \delta_t) - \lambda C(\eta_t, \delta_t) 
\bigg\},
\label{eq:optimal_dr}
\end{equation}
where $\mc{U}(\eta_t, \delta_t)$ is the is the expected utility of the estimate-to-action channel induced by $\eta_t$ and $\delta_t$, the cost $C(\eta_t, \delta_t)$ is the rate of the channel, and $\lambda \geq 0$ is a parameter that trades off utility and rate.

Equation~\ref{eq:optimal_dr} defines an optimization target for a capacity-limited Bayesian decision-rule. However, having an optimization target does not tell us how difficult it is to find or approximate a solution, or what the solution is for a specific problem. In the next section, we discuss and analyze one illustrative procedure for finding $\delta^\star_t$ which then tethers the decision-rule to agent learning via Thompson Sampling~\citep{thompson1933likelihood,russo2018tutorial}. 

\subsection{Thompson Sampling with Capacity-Limitations}

Different decision-rules are distinguished by the type of representation they use and the algorithms that operate over those representations. For example, some decision-rules only use a \emph{point-estimate} of each action's expected reward, such as \emph{reward maximization}, $\varepsilon$-\emph{greedy reward maximization}~\citep{cesa1998finite,vermorel2005multi,kuleshov2014algorithms}, \emph{Boltzmann}/\emph{softmax} action selection~\citep{littman1996algorithms,kuleshov2014algorithms,asadi2017alternative}, or \emph{upper-confidence bound} (UCB) action selection~\citep{auer2002finite,auer2002using,kocsis2006bandit}. Some of these rules also provide parameterized levels of ``noisiness'' that facilitate random exploration---\textit{e.g.}, the probability of selecting an action at random in $\varepsilon$-greedy, the temperature in a Boltzmann distribution, and the bias factor in UCB.

\begin{algorithm}[t]
   \caption{Thompson Sampling}
   \label{alg:thompson}
\begin{algorithmic}
   \STATE {\bfseries Input:} Environment-estimator $\eta(E)$ Reward Function $R$, Action space $\mathcal{A}$
   \STATE {\bfseries Output:} Action $a' \in \mathcal{A}$
   \STATE $e \sim \eta(E)$ 
   \STATE $\mathcal{A}^\star = \{ a \in \mathcal{A} : R(e, a) = \max_b R(e, b)\}$
   \STATE $a' \sim \text{Uniform}(\mathcal{A}^\star)$
   \RETURN $a'$
\end{algorithmic}
\end{algorithm}

\begin{algorithm}[H]
   \caption{Blahut-Arimoto STS (BLASTS)~\citep{arumugam2021deciding}}
   \label{alg:value_blasts}
\begin{algorithmic}
   \STATE {\bfseries Input:} Environment-estimator $\eta(\mc{E})$, Rate parameter $\lambda \geq 0$, Blahut-Arimoto Iterations $K \in \mathbb{N}$, Utility Function $U$, Posterior sample count $Z \in \mathbb{N}$, Action space $\mathcal{A}$
   \STATE {\bfseries Output:} Action $a' \in \mathcal{A}$
   \STATE $e_1, ..., e_Z \sim \eta(\mc{E})$ 
   \STATE $\delta_0(a \mid e_z) = \frac{1}{|\mathcal{A}|}$, $\forall a \in \mathcal{A}, \forall z \in [Z]$
   \FOR{$k \in [K]$}
   \FOR{$a \in \mathcal{A}$}
   \STATE $q_k(a) = \frac{1}{Z}\sum_z \delta_k(a \mid e_z)$
   \STATE $\delta_{k+1}(a \mid e_z) \propto q_k(a) \exp\{\frac{1}{\lambda}U(e_z, a)\}$
   \ENDFOR
   \ENDFOR
   \STATE $z' \sim \text{Uniform}([Z])$
   \STATE $a' \sim \delta_{K}(\cdot \mid e_{z'})$
   \RETURN $a'$
\end{algorithmic}
\end{algorithm}



In the Bayesian setting, decision-rules can take advantage of \emph{distributional information} which captures epistemic uncertainty~\citep{der2009aleatory} reflected by the agent's knowledge, rather than the aleatoric uncertainty present due to random noise. For example, \emph{Thompson Sampling}~\citep{thompson1933likelihood,russo2018tutorial} makes explicit use of distributional information by first sampling an environment and then selecting the best action under the premise that the sampled version of the environment reflects reality. This specific mapping from the sampled candidate environment and the corresponding best action(s) constitutes a particular decision-rule. Selecting actions to execute within the environment by sampling from this decision rule constitutes a coherent procedure (formally outlined in Algorithm~\ref{alg:thompson}) that implicitly determines an action distribution given the current history of interaction $H_t$; sampling and executing actions in this manner is often characterized as \emph{probability matching}~\citep{agrawal2012analysis,agrawal2013further,russo2016information} where an action is only ever executed according to its probability of being optimal. Because Thompson Sampling is straightforward to implement and has good theoretical learning guarantees, it is frequently used in machine learning applications~\citep{chapelle2011empirical}. Additionally, humans often display key signatures of selecting actions via Thompson Sampling~\citep{vulkan2000economist,wozny2010probability,gershman2018deconstructing}. In short, Thompson Sampling is a simple, robust, and well-studied Bayesian algorithm that is, by design, tailored to a particular decision-rule. However, this decision-rule and, by extension, the standard version of Thompson Sampling as a whole, assumes that the estimate-to-action channel has unlimited capacity. What if, instead, we consider a version in which the rate is penalized and the decision-rule is optimized as in Equation~\ref{eq:optimal_dr}?

This consideration motivates Blahut-Arimoto Satisficing Thompson Sampling (BLASTS), an algorithm first proposed by~\cite{arumugam2021deciding}. In order to approximate an optimal decision-rule given an environment-estimator $\eta$ and rate parameter $\lambda \geq 0$, BLASTS (whose pseudocode appears as Algorithm \ref{alg:value_blasts}) performs three high-level procedures. First, it approximates the environment distribution by drawing $Z \in \bN$ Monte-Carlo samples from $\eta$ and proceeding with the resulting empirical distribution. Second, it uses Blahut-Arimoto---a classic algorithm from the rate-distortion theory literature~\citep{blahut1972computation,arimoto1972algorithm}---to iteratively compute the (globally) optimal decision-rule, $\delta^\star$, whose support is a finite action space $\mathcal{A}$. Finally, it uniformly samples one of the $Z$ initially drawn environment configurations $e'$ and then samples an action $a'$ from the computed decision-rule conditioned on that realization $e'$ of the environment. This last step allows for a generalized retention of the probability matching principle seen in  Thompson Sampling; that is, actions are only ever executed according to their probability of striking the right balance in Equation \ref{eq:optimal_dr}. One can observe that a BLASTS agent with no regard for respecting capacity limitations ($\lambda = 0$) will recover the Thompson Sampling decision-rule as a special case.

\begin{figure}[]
\centering
\includegraphics[width=.90\linewidth]{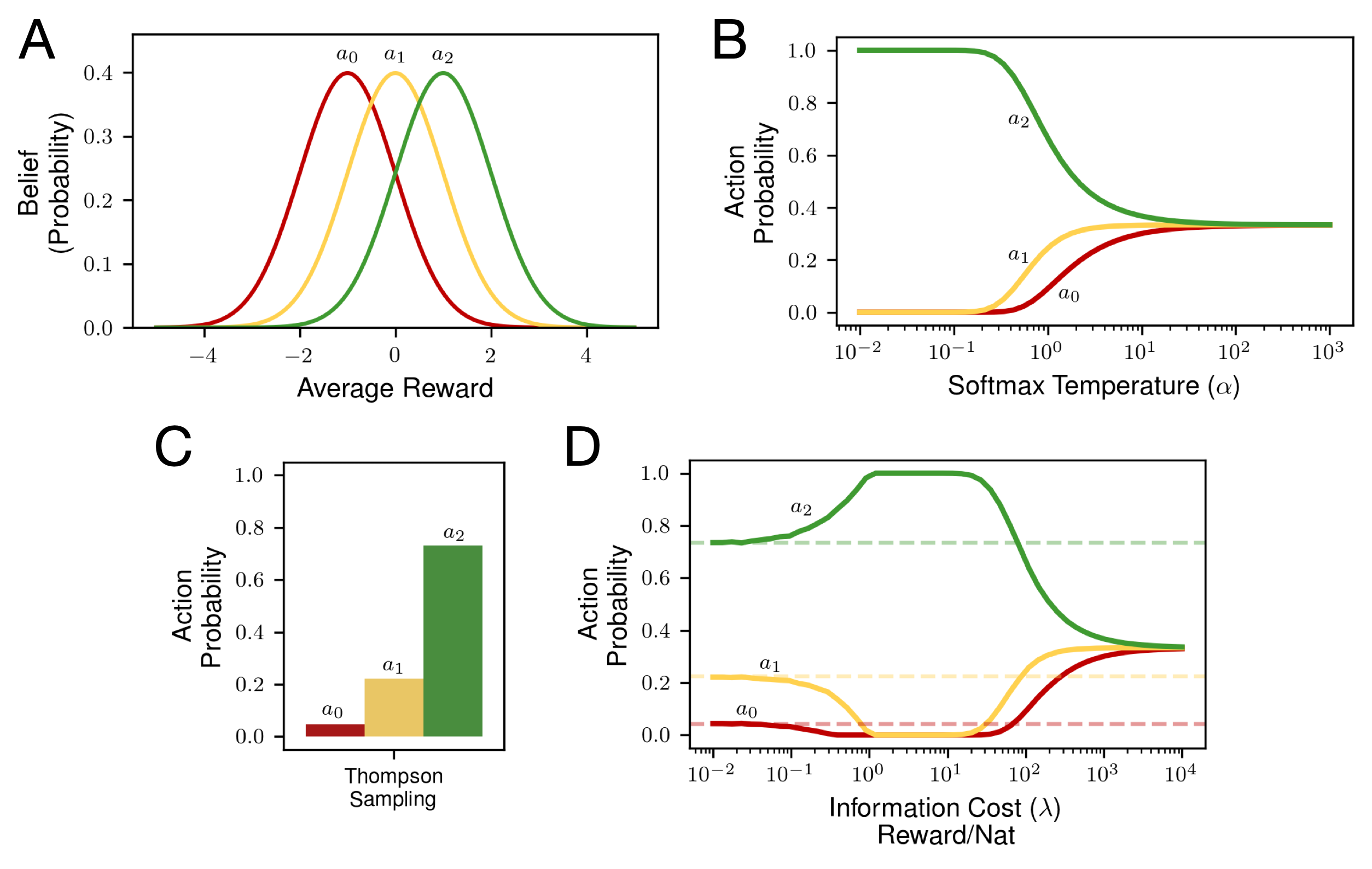}
\caption{
Capacity-limited decision-making in a three-armed bandit. (A) Bayesian decision-makers represent probabilistic uncertainty over their environment. Shown are Gaussian beliefs for average rewards for three actions, $a_0, a_1$, and $a_2$, with location parameters $\mu_0 = -1$, $\mu_1 = 0$, $\mu_2 = 1$, and standard deviations $\sigma_i = 1$ for $i = 0, 1, 2$. (B) A non-Bayesian decision-rule is the Boltzmann or soft-max distribution~\citep{littman1996algorithms}, which has a temperature parameter $\alpha > 0$. For the values in panel A, as $\alpha \rightarrow 0$, the action with the highest expected reward is chosen more deterministically; as $\alpha \rightarrow \infty$, actions are chosen uniformly at random. The Boltzmann decision-rule ignores distributional information. (C) An alternative decision-rule that is sensitive to distributional information is Thompson Sampling~\citep{thompson1933likelihood}, which implements a form of \emph{probability matching} that is useful for exploration~\citep{russo2016information}. Shown are the Thompson Sampling probabilities based on $N = 10,000$ samples. Thompson Sampling has no parameters. (D) In capacity-limited decision-making, action distributions that are more tightly coupled to beliefs about average rewards---i.e., those with higher mutual information or \emph{rate}---are penalized. The parameter $\lambda \geq 0$ controls the penalty and represents the cost of information in rewards per nat. Blahut-Arimoto Satisficing Thompson Sampling (BLASTS)~\citep{arumugam2021deciding} generalizes Thompson Sampling by finding the estimate-to-action channel that optimally trades off rewards and rate for a value of $\lambda$. In the current example, when $0 < \lambda \leq 10^{-1}$, information is cheap and BLASTS implements standard Thompson Sampling; when $10^{-1} \leq \lambda \leq 10^{1}$, BLASTS prioritizes information relevant to maximizing rewards and focuses on exploiting arms with higher expected reward, eventually only focusing on the single best; when $\lambda \geq 10^1$, information is too expensive to even exploit, so BLASTS resembles a Boltzmann distribution with increasing temperature, tending towards a uniform action distribution---that is, one that is completely uninformed by beliefs. Solid lines represent action probabilities according to BLASTS ($Z = 50,000$); dotted lines are standard Thompson Sampling probabilities for reference. 
}
\label{fig:simple_sims-1}
\end{figure}

Since BLASTS constructs the estimate-to-action channel that optimally trades off utility and rate, the action distribution it generates is primarily sensitive to the rate parameter, $\lambda$, and the environment-estimator, $\eta$. To illustrate the behavior of the optimal decision-rule, we conducted two sets of simulations that manipulated these factors in simple three-armed bandit tasks. Our first set of simulations examined the effect of different values of the rate parameter $\lambda$, which intuitively corresponds to the \emph{cost of information} measured in units of utils per nat. We calculated the marginal action distribution, $\pi(a) = \sum_{e}\delta^\star(a \mid e)\eta(e)$, where the belief distribution over average rewards for the three arms was represented by three independent Gaussian distributions respectively centered at $-1$, $0$, and $1$; all three distributions had a standard deviation of $1$ (Figure~\ref{fig:simple_sims-1}A). 

Remarkably, even on this simple problem, BLASTS displays three qualitatively different regimes of action selection when varying the rate parameter, $\lambda$, from $10^{-2}$ to $10^4$. When information is inexpensive ($\lambda < 10^{-1}$), the action distribution mimics the exploratory behavior of Thompson Sampling (consistent with theoretical predictions~\citep{arumugam2021deciding}). As information becomes moderately expensive ($10^{-1} \leq \lambda \leq 10^{1}$), BLASTS focuses channel capacity on the actions with higher expected utility by first reducing its selection of the worst action in expectation ($a_0$) followed by the second-worst/second-best action in expectation ($a_1$), which results in it purely exploiting the best action in expectation ($a_2$). Finally, as the util per nat becomes even greater ($\lambda \geq 10^1$) BLASTS produces actions that are \emph{uninformed} by its beliefs about the environment. This occurs in a manner that resembles a Boltzmann distribution with increasing temperature, eventually saturating at a uniform distribution over actions. These patterns are visualized in Figure~\ref{fig:simple_sims-1}B-D, which compare action probabilities for Boltzmann, Thompson Sampling, and BLASTS.

\begin{figure}[H]
\centering
\includegraphics[width=\linewidth]{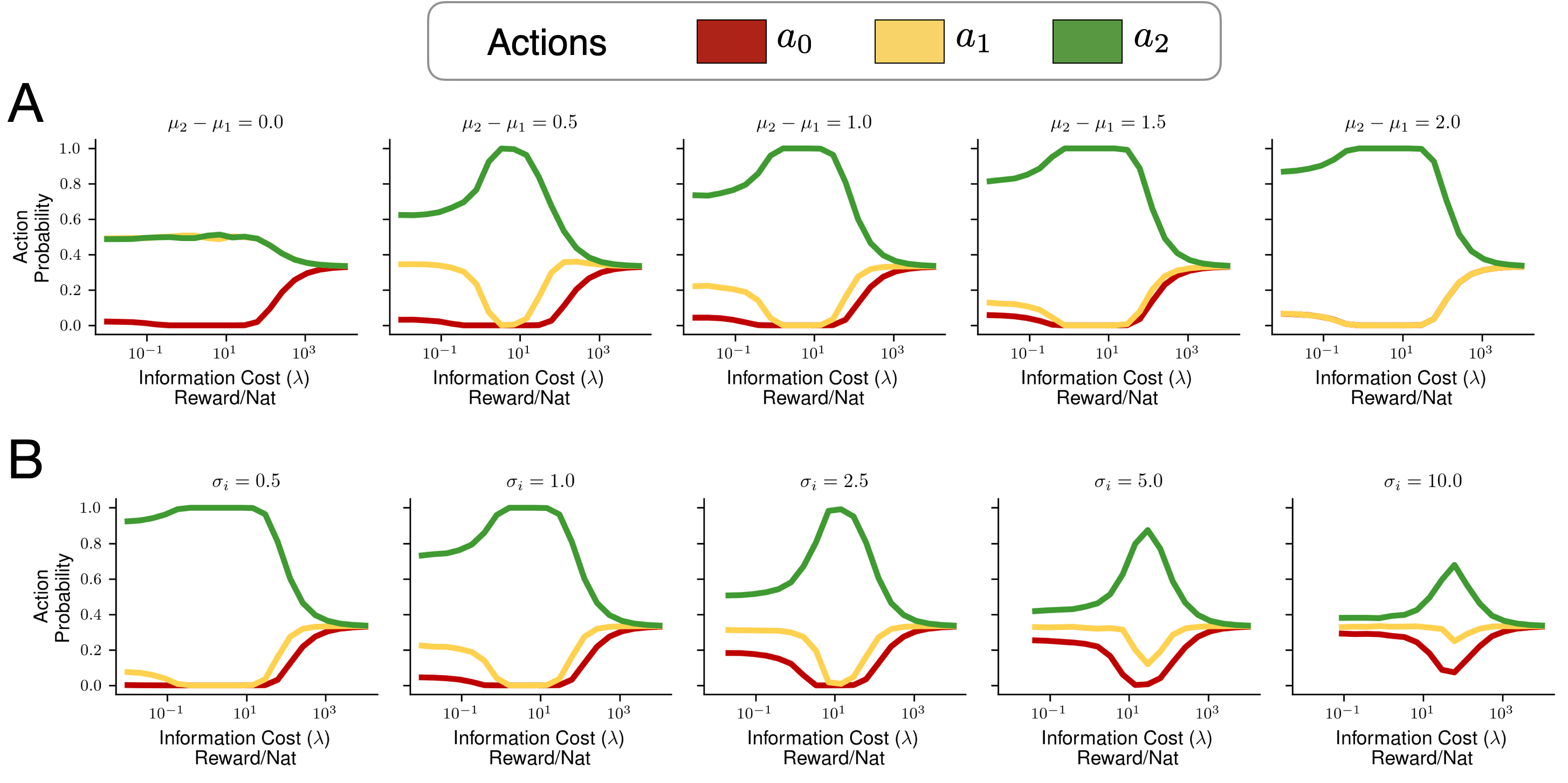}
\caption{
Blahut-Arimoto Satisficing Thompson Sampling (BLASTS) for different beliefs about average rewards in a three-armed bandit. (A) BLASTS is sensitive to the \emph{action gap}---the difference between the expected reward of the highest and second highest actions. Shown are action probability by information cost curves when $\mu_1$ from the example in Figure~\ref{fig:simple_sims-1}A is set to values in $\{-1.0, 0.5, 0.0, 0.5, 1.0\}$ and all other belief parameters are held constant. (B) BLASTS is also sensitive to the degree of uncertainty---\textit{e.g.}, the standard deviation of average reward estimates for each action. Shown are action probability / information cost curves when the standard deviation for each arm in Figure~\ref{fig:simple_sims-1}, $\sigma_i$, $i = 0, 1, 2$ is set to different values.
}
\label{fig:simple_sims-2}
\end{figure}

Our second set of simulations examine the relationship between the cost of information $\lambda$ and BLASTS action probabilities for different environment-estimates. Specifically, we first examined the effect of changing beliefs about the \emph{action gap}, the difference between the best and second-best action in expectation~\citep{auer2002finite,agrawal2012analysis,agrawal2013further,farahmand2011action,bellemare2016increasing}. As shown in Figure~\ref{fig:simple_sims-2}A, when the action gap is lower (corresponding to a more difficult decision-making task), BLASTS chooses the optimal action with lower probability for all values of $\lambda$. In addition, we examined the effect of changing uncertainty in the average rewards by setting different standard deviations for beliefs about the arms. Figure~\ref{fig:simple_sims-2}B shows that as uncertainty increases, BLASTS is less likely to differentially select an arm even in the ``exploitation'' regime for moderate values of $\lambda$. Sensitivity to the action gap and uncertainty are key features of BLASTS that derive from the fact that it uses distributional information to guide decision-making, unlike decision-rules such as $\varepsilon$-greedy or Boltzmann softmax.

\subsection{Summary}

In the standard formulation of Bayesian decision-making, we assume an environment-estimator and decision-rule that are specified independently. By extending ideas from rate-distortion theory, \cite{arumugam2021deciding} defined a notion of capacity-limitations applicable to decision-rules as well as an efficient algorithm for finding an optimal capacity-limited variant of Thompson Sampling (BLASTS). In this section, we analyzed how choice distributions change as a function of the cost of information and current environment estimates, which provides some intuition for how capacity-limitations affect choice from the agent's \emph{subjective} point of view. In the next section, we take a more \emph{objective} point of view by studying the learning dynamics that arise when capacity-limited agents interact with an environment over time.

\section{Capacity-Limited Bayesian Reinforcement Learning}



The preceding section provides a cursory overview of how rate-distortion theory accommodates capacity-limited learning within a Bayesian decision-making agent. In this section, we aim to provide mathematically-precise instantiations of the earlier concepts for three distinct problem classes: \textbf{(1)} continual or lifelong learning, \textbf{(2)} multi-armed bandits, and \textbf{(3)} episodic Markov decision processes. Of these three types of environments, the capacity-limited learning framework we provide for continual learning is a novel contribution of this work whereas the remaining two classes (which emerge as special cases of continual learning) have been examined in prior work~\citep{arumugam2021deciding,arumugam2021the,arumugam2022deciding}. 

\subsection{Preliminaries}
\label{sec:prelims}

In this section, we provide brief details on our notation and information-theoretic quantities used throughout the remainder of the paper. We encourage readers to consult \citep{cover2012elements,gray2011entropy,duchi21ItLectNotes,polyanskiy2022IT} for more background on information theory. We define all random variables with respect to a probability space $(\Omega, \mc{F}, \bP)$. For any two random variables $X$ and $Y$, we use the shorthand notation $p(X) \triangleq \bP(X \in \cdot)$ to denote the law or distribution of the random variable $X$ and, analogously, $p(X \mid Y) \triangleq \bP(X \in \cdot \mid Y)$ as well as $p(X \mid Y = y) \triangleq \bP(X \in \cdot \mid Y = y)$ for the associated conditional distributions given $Y$ and a realization of $Y$, respectively. For the ease of exposition, \textbf{we will assume throughout this work that all random variables are discrete}; aside from there being essentially no loss of generality by assuming this (see Equation 2.2.1 of \citep{duchi21ItLectNotes} or Theorem 4.5 of \citep{polyanskiy2022IT} for the Gelfand-Yaglom-Perez definition of divergence \citep{gelfand1959calculation,perez1959information}), extensions to arbitrary random variables taking values on abstract spaces are straightforward and any theoretical results presented follow through naturally to these settings. In the case of any mentioned real-valued or vector-valued random variables, one should think of these as discrete with support obtained from some suitably fine quantization such that the resulting discretization error is negligible.  For any natural number $N \in \bN$, we denote the index set as $[N] \triangleq \{1,2,\ldots,N\}$. For any arbitrary set $\mc{X}$, $\Delta(\mc{X})$ denotes the set of all probability distributions with support on $\mc{X}$. For any two arbitrary sets $\mc{X}$ and $\mc{Y}$, we denote the class of all functions mapping from $\mc{X}$ to $\mc{Y}$ as $\{\mc{X} \ra \mc{Y}\} \triangleq \{f \mid f:\mc{X} \ra \mc{Y}\}$.

We define the mutual information between any two random variables $X,Y$ through the Kullback-Leibler (KL) divergence: $$\bI(X;Y) = \kl{p(X,Y)}{p(X)p(Y)}, \qquad \kl{q_1}{q_2} = \sum\limits_{x \in \mc{X}} q_1(x)\log\left(\frac{q_1(x)}{q_2(x)}\right),$$ where $q_1, q_2 \in \Delta(\mc{X})$ are both probability distributions. An analogous definition of conditional mutual information holds through the expected KL-divergence for any three random variables $X,Y,Z$:
$$\bI(X;Y \mid Z) = \bE\left[\kl{p(X,Y \mid Z)}{p(X \mid Z)p(Y \mid Z)}\right].$$
With these definitions in hand, we may define the entropy and conditional entropy for any two random variables $X,Y$ as $$\bH(X) = \bI(X;X) \qquad \bH(Y \mid X) = \bH(Y) - \bI(X;Y).$$ This yields the following identities for mutual information and conditional mutual information for any three arbitrary random variables $X$, $Y$, and $Z$:
$$\bI(X;Y) = \bH(X) - \bH(X \mid Y) = \bH(Y) - \bH(Y | X), \qquad \bI(X;Y|Z) = \bH(X|Z) - \bH(X \mid Y,Z) = \bH(Y|Z) - \bH(Y | X,Z).$$
Finally, for any three random variables $X$, $Y$, and $Z$ which form the Markov chain $X \ra Y \ra Z$, we have the following data-processing inequality: $\bI(X;Z) \leq \bI(X;Y)$.

In subsequent sections, the random variable $H_t$ will often appear denoting the current history of an agent's interaction with the environment. We will use $p_t(X) = p(X \mid H_t)$ as shorthand notation for the conditional distribution of any random variable $X$ given a random realization of an agent's history $H_t$, at any timestep $t \in [T]$. Similarly, we denote the entropy and conditional entropy conditioned upon a specific realization of an agent's history $H_t$, for some timestep $t \in [T]$, as $\bH_t(X) \triangleq \bH(X \mid H_t = H_t)$ and $\bH_t(X \mid Y) \triangleq \bH_t(X \mid Y, H_t = H_t)$, for two arbitrary random variables $X$ and $Y$. This notation will also apply analogously to the mutual information $\bI_t(X;Y) \triangleq \bI(X;Y \mid H_t = H_t) = \bH_t(X) - \bH_t(X \mid Y) = \bH_t(Y) - \bH_t(Y \mid X),$ as well as the conditional mutual information $\bI_t(X;Y \mid Z) \triangleq \bI(X;Y \mid H_t = H_t, Z),$ given an arbitrary third random variable, $Z$. A reader should interpret this as recognizing that, while standard information-theoretic quantities average over all associated random variables, an agent attempting to quantify information for the purposes of exploration does so not by averaging over all possible histories that it could potentially experience, but rather by conditioning based on the particular random history $H_t$ that it has currently observed thus far. This dependence on the random realization of history $H_t$ makes all of the aforementioned quantities random variables themselves. The traditional notions of conditional entropy and conditional mutual information given the random variable $H_t$ arise by taking an expectation over histories: $$\begin{cases} \bE\left[\bH_t(X)\right] = \bH(X \mid H_t) \\ \bE\left[\bH_t(X \mid Y)\right] = \bH(X \mid Y, H_t)\end{cases}, \qquad \begin{cases}\bE\left[\bI_t(X;Y)\right] = \bI(X;Y \mid H_t), \\ \bE\left[\bI_t(X;Y \mid Z)\right] = \bI(X;Y \mid H_t,Z)\end{cases} .$$
Additionally, we will also adopt a similar notation to express a conditional expectation given the random history $H_t$: $\bE_t\left[X\right] \triangleq \bE\left[X \mid H_t\right].$

\subsection{Continual Learning}
\label{sec:continual}

At the most abstract level, we may think of a decision-making agent faced with a continual or lifelong learning setting~\citep{thrun1994finding,konidaris2006autonomous,wilson2007multi,lazaric2011transfer,brunskill2013sample,brunskill2015online,isele2016using,abel2018policy} within a single, stationary environment, which makes no further assumptions about Markovity or episodicity; such a problem formulation aligns with those of \citet{lu2021reinforcement,foster2021statistical,dong2022simple}, spanning multi-armed bandits and reinforcement-learning problems~\citep{lattimore2020bandit,sutton1998introduction}. More concretely, we adopt a generic agent-environment interface where, at each time period $t$, the agent executes an action $A_t \in \mc{A}$ within an environment $\mc{E} \in \Theta$ that results in an associated next observation $O_{t} \in \mc{O}$. This sequential interaction between agent and environment yields an associated history\footnote{At the very first timestep, the initial history only consists of an initial observation $H_0 = O_0 \in \mc{O}$.} at each timestep $t$, $H_t = (O_0, A_1, O_1, \ldots, A_{t-1}, O_{t-1}) \in \mc{H}$, representing the action-observation sequence available to the agent upon making its selection of its current action $A_t$. We may characterize the overall environment as $\mc{E} = \langle \mc{A}, \mc{O}, \rho \rangle \in \Theta$ containing the action set $\mc{A}$, observation set $\mc{O}$, and observation function $\rho: \mc{H} \times \mc{A} \ra \Delta(\mc{O})$, prescribing the distribution over next observations given the current history and action selection: $\rho(O_{t} \mid H_t, A_t) = \bP(O_{t} \mid \mc{E}, H_t, A_t)$. 

An agent's policy $\pi: \mc{H} \ra \Delta(\mc{A})$ encapsulates the relationship between the history encountered in each timestep $H_t$ and the executed action $A_t$ such that $\pi_t(a) = \bP(A_t = a \mid H_t)$ assigns a probability to each action $a \in \mc{A}$ given the history. Preferences across histories are expressed via a known reward function $r: \mc{H} \times \mc{A} \times \mc{O} \ra \bR$ so that an agent enjoys a reward $R_{t} = r(H_t, A_t, O_{t})$ on each timestep. Given any finite time horizon $T \in \bN$, the accumulation of rewards provide a notion of return $\sum\limits_{t=1}^{T} r(H_t, A_t, O_{t})$. To develop preferences over behaviors and to help facilitate action selection, it is often natural to associate with each policy $\pi$ a corresponding expected return or action-value function $Q^\pi: \mc{H} \times \mc{A} \ra \bR$ across the horizon $T$ as $Q^\pi(h,a) = \bE\left[\sum\limits_{t=1}^{T} r(H_t, A_t, O_{t}) \mid H_0 = h, A_0 = a , \mc{E} \right],$ where the expectation integrates over the randomness in the policy $\pi$ as well as the observation function $\rho$. Traditionally, an agent designer focuses on agents that strive to achieve the optimal value within the confines of some policy class $\Pi \subseteq \{\mc{H} \ra \Delta(\mc{A})\}$, $Q^\star(h,a) = \sup\limits_{\pi \in \Pi} Q^\pi(h,a)$, $\forall (h,a) \in \mc{H} \times \mc{A}$. The optimal policy then follows by acting greedily with respect to this optimal value function: $\pi^\star(h) = \argmax\limits_{a \in \mc{A}} Q^\star(h,a)$.


Observe that when rewards and the distribution of the next observation $O_{t}$ depend only on the current observation-action pair $(O_{t-1},A_t)$, rather than the full history $H_t$, we recover the traditional Markov Decision Process~\citep{bellman1957markovian,Puterman94} studied throughout the reinforcement-learning literature~\citep{sutton1998introduction}. Alternatively, when these quantities rely solely upon the most recent action $A_t$, we recover the traditional multi-armed bandit~\citep{lai1985asymptotically,bubeck2012regret,lattimore2020bandit}. Regardless of precisely which of these two problem settings one encounters, a default presumption throughout both literatures is that an agent should always act in pursuit of learning an optimal policy $\pi^\star$. Bayesian decision-making agents~\citep{bellman1959adaptive,duff2002optimal,ghavamzadeh2015bayesian} aim to achieve this by explicitly representing and maintaining the agent's current knowledge of the environment, recognizing that it is the uncertainty in the underlying environment $\mc{E}$ that drives uncertainty in optimal behavior $\pi^\star$. A Bayesian learner reflects this uncertainty through conditional probabilities $\eta_t(e) \triangleq \bP(\mc{E} = e \mid H_t)$, $\forall e \in \Theta$ aimed at estimating the underlying environment. Under the prior distribution $\eta_1(\mc{E})$, the entropy of this random variable $\mc{E}$ implies that a total of $\bH_1(\mc{E})$ bits quantify all of the information needed for identifying the environment and, as a result, synthesizing optimal behavior. For sufficiently rich and complex environments, however, $\bH_1(\mc{E})$ can become prohibitively large or even infinite, making the pursuit of an optimal policy entirely intractable. 

The core insight of this work is recognizing that a delicate balance between the amount of information processing that goes into a decision (\textit{cognitive load}) and the quality of that decision (\textit{utility}) can be aptly characterized through rate-distortion theory, providing a formal framework for capacity-limited decision making. At each time period $t \in [T]$, the agent's current knowledge about the underlying environment is fully specified by the distribution $\eta_t$. Whereas the standard Thompson Sampling (TS) agent will attempt to use this knowledge for identifying an optimal action $A^\star \in \argmax\limits_{a \in \mc{A}} Q^\star(H_t, a)$ by default, a capacity-limited agent may not be capable of operationalizing all bits of information from its beliefs about the world to discern a current action $A_t$. 

Rate-distortion theory~\citep{shannon1959coding,berger1971rate} is a branch of information theory~\citep{shannon1948mathematical,cover2012elements} dedicated to the study of lossy compression problems which necessarily must optimize for a balance between the raw amount of information retained in the compression and the utility of those bits for some downstream task; a classic example of this from the information-theory literature is a particular image that must be compressed down to a smaller resolution (fewer bits of information) without overly compromising the visual acuity of the content (bounded distortion). A capacity-limited agent will take its current knowledge $\eta_t$ as the information source to be compressed in each time period $t \in [T]$. The lossy compression mechanism or channel itself is simply a conditional probability distribution $p(A_t \mid \mc{E})$ that maps a potential realization of the unknown environment $\mc{E} \in \Theta$ to a corresponding distribution over actions for the current time period. Naturally, the amount of information used from the environment to identify this action is precisely quantified by the mutual information between these two random variables, $\bI_t(\mc{E}; A_t)$, where the $t$ subscript capture the dependence of the agent's beliefs $\eta_t$ on the current random history $H_t$. 

Aside from identifying the data to be compressed, a lossy compression problem also requires the specification of a distortion function $d: \mc{A} \times \Theta \ra \bR_{\geq 0}$ which helps distinguish between useful and irrelevant bits of 
information contained in the environment. Intuitively, environment-action pairs yielding high distortion are commensurate with achieving high loss and, naturally, a good lossy compression mechanism is one that can avoid large expected distortion, $\bE_t\left[d(A_t, \mc{E})\right].$ Putting these two pieces together, the fundamental limit of lossy compression is given by the rate-distortion function 
\begin{align}
    \mc{R}_t(D) &= \inf\limits_{p(A_t \mid \mc{E})} \bI_t(\mc{E}; A_t) \text{ such that } \bE_t\left[d(A_t, \mc{E})\right] \leq D,\label{eq:continual_rdf}
\end{align}
where we denote the conditional distribution that achieves this infimum as $\delta_t(\widetilde{A}_t \mid \mc{E})$ where $\widetilde{A}_t$ is the random variable representing this \textit{target action} that achieves the rate-distortion limit. A bounded decision maker with limited information processing can only hope to make near-optimal decisions. Thus, a natural way to quantify distortion is given by the expected performance shortfall between an optimal decision and the chosen one.  
$$d(a, \theta) = \bE_t\left[Q^\star(H_t, A^\star) - Q^\star(H_t, a)\mid \mc{E} = \theta\right].$$
The distortion threshold $D \in \bR_{\geq 0}$ input to the rate-distortion function is a free parameter specified by an agent designer that communicates a preferences for the minimization of rate versus the minimization of distortion. This aligns with a perspective that a decision-making agent has a certain degree of tolerance for sub-optimal behavior and, with that degree of error in mind, chooses among the viable near-optimal solutions that incur the least cognitive load to compute actions from beliefs about the world. If one is willing to tolerate significant errors and large amounts of regret, than decision-making should be far simpler in the sense that very few bits of information from beliefs about the environment are needed to select an action. Conversely, as prioritizing near-optimal behavior becomes more important, each decision requires greater cognitive effort as measure by the amount of information utilized to compute actions from current beliefs. The power of rate-distortion theory, in part, lies in the ability to give precise mathematical form to this intuitive narrative, as demonstrated by Fact \ref{fact:rdf}.
\begin{fact}[Lemma 10.4.1~\citep{cover2012elements}] For all $t \in [T]$ and any $D > 0$, the rate-distortion function $\mc{R}_t(D)$ is a non-negative, convex, and non-increasing function in its argument.
\label{fact:rdf}
\end{fact}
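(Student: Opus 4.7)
The plan is to verify the three properties separately, working directly from the variational definition in Equation \ref{eq:continual_rdf}. For each property I will treat $t$ and the conditioning history $H_t$ as fixed, since all quantities are evaluated under $p_t$.

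First, for \emph{non-negativity}, I will simply note that mutual information is always non-negative, so every candidate channel $p(A_t \mid \mc{E})$ in the feasible set contributes a non-negative value of $\bI_t(\mc{E}; A_t)$. Taking an infimum of a set of non-negative numbers preserves non-negativity. For \emph{monotonicity}, suppose $D_1 \leq D_2$. Then any channel $p(A_t \mid \mc{E})$ satisfying the tighter constraint $\bE_t[d(A_t, \mc{E})] \leq D_1$ automatically satisfies the looser constraint with $D_2$ on the right-hand side. Hence the feasible set at $D_2$ contains the feasible set at $D_1$, and the infimum over a larger set can only be smaller or equal, giving $\mc{R}_t(D_2) \leq \mc{R}_t(D_1)$.

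The main work lies in \emph{convexity}. I would argue as follows. Fix $D_1, D_2 \geq 0$ and $\alpha \in [0,1]$, and let $D_\alpha = \alpha D_1 + (1-\alpha) D_2$. For any $\eps > 0$, pick channels $p_1(A_t \mid \mc{E})$ and $p_2(A_t \mid \mc{E})$ that are $\eps$-near-optimal for $\mc{R}_t(D_1)$ and $\mc{R}_t(D_2)$ respectively, meaning they satisfy their respective distortion constraints and achieve mutual information within $\eps$ of the infimum. Define the mixture channel $p_\alpha(A_t \mid \mc{E}) = \alpha p_1(A_t \mid \mc{E}) + (1-\alpha) p_2(A_t \mid \mc{E})$. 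Because expected distortion is linear in the channel (with the source $\eta_t$ fixed), this mixture satisfies
\[
\bE_t\left[d(A_t, \mc{E})\right]_{p_\alpha} = \alpha \bE_t\left[d(A_t, \mc{E})\right]_{p_1} + (1-\alpha) \bE_t\left[d(A_t, \mc{E})\right]_{p_2} \leq D_\alpha,
\]
so $p_\alpha$ is feasible at distortion level $D_\alpha$. The key fact I will invoke is that mutual information $\bI_t(\mc{E}; A_t)$ is a convex function of the channel $p(A_t \mid \mc{E})$ when the marginal on $\mc{E}$ is held fixed; this is standard (e.g.\ Theorem 2.7.4 of \citep{cover2012elements}) and follows from the log-sum inequality applied to the conditional KL representation of mutual information. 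Applying this convexity gives
\[
\bI_t(\mc{E}; A_t)_{p_\alpha} \leq \alpha \bI_t(\mc{E}; A_t)_{p_1} + (1-\alpha) \bI_t(\mc{E}; A_t)_{p_2} \leq \alpha \mc{R}_t(D_1) + (1-\alpha) \mc{R}_t(D_2) + \eps.
\]
Since $p_\alpha$ is feasible at $D_\alpha$, the infimum defining $\mc{R}_t(D_\alpha)$ is bounded above by this quantity, and letting $\eps \downarrow 0$ yields $\mc{R}_t(D_\alpha) \leq \alpha \mc{R}_t(D_1) + (1-\alpha) \mc{R}_t(D_2)$.

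The main obstacle is the convexity of mutual information in the channel; everything else is essentially bookkeeping about feasible sets and infima. I would either cite this as a well-known information-theoretic fact or, if the paper prefers a self-contained argument, give a short derivation via the log-sum inequality applied to the two conditional distributions $p_1(A_t \mid \mc{E} = \theta)$ and $p_2(A_t \mid \mc{E} = \theta)$ and their induced marginals on $A_t$. A minor care point is that the infima in Equation \ref{eq:continual_rdf} may not be attained in general, which is why I work with $\eps$-optimal channels and take a limit, rather than assuming minimizers exist.
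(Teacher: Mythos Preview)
Your proof is correct and is essentially the standard argument. However, note that the paper does not actually prove this statement: it is stated as a \emph{Fact} with a direct citation to Lemma~10.4.1 of \citet{cover2012elements}, and no proof appears in the body or the appendix. The paper simply invokes the result as a known property of the rate-distortion function and moves on to use it (for instance, in the proof of Lemma~\ref{lemma:exp_rdf_ub}).

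What you have written is precisely the textbook proof one would find in that reference: non-negativity from non-negativity of mutual information, monotonicity from nesting of feasible sets, and convexity from convexity of $\bI_t(\mc{E};A_t)$ in the channel $p(A_t \mid \mc{E})$ for fixed source distribution together with linearity of expected distortion in the channel. Your use of $\eps$-optimal channels to avoid assuming the infimum is attained is a nice touch and slightly more careful than some presentations. If the goal were to make the paper self-contained, your argument would be the natural one to include; as it stands, the paper treats this as background and there is no ``paper's own proof'' to compare against.
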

In particular, Fact \ref{fact:rdf} establishes the following relationship for any $D > 0$, $$\mc{R}_t(D) \leq
\mc{R}_t(0) \leq \bI_t(\mc{E};A^\star) = \bH_t(A^\star) - \ubr{\bH_t(A^\star \mid \mc{E})}_{\geq 0} \leq \bH_t(A^\star),$$ confirming that the amount of information used to determine $A_t$ is less than what would be needed to identify an optimal action $A^\star$. 

Alternatively, in lieu of presuming that an agent is cognizant of what constitutes a ``good enough'' solution, one may instead adopt the perspective that an agent is made aware of its capacity limitations. In this context, agent capacity refers to a bound $R \in \bR_{\geq 0}$ on the number of bits an agent may operationalize from its beliefs about the world in order to discern its current action selection $A_t$. Conveniently, the information-theoretic optimal solution is characterized by the Shannon distortion-rate function: 
\begin{align}
    \mc{D}_t(R) &= \inf\limits_{p(A_t \mid \mc{E})} \bE_t\left[d(A_t, \mc{E})\right] \text{ such that } \bI_t(\mc{E}; A_t) \leq R.\label{eq:continual_drf}
\end{align}
Natural limitations on a decision-making agent's time or computational resources can be translated and expressed as limitations on the sheer amount of information that can possibly be leveraged from beliefs about the environment $\mc{E}$ to execute actions; the distortion-rate function $\mc{D}_t(R)$ quantifies the fundamental limit on minimum expected distortion that an agent should expect under such a capacity constraint. It is oftentimes convenient that the rate-distortion function and distortion-rate function are inverses of one another such that $\mc{R}_t(\mc{D}_t(R)) = R$.

In this section, we have provided a mathematical formulation for how a capacity-limited agent might go about action selections in each time period that limit overall cognitive load in an information-theoretically optimal fashion while also leveraging as much of its environmental knowledge as possible to behave with limited sub-optimality. To elucidate the value of this formulation, we dedicate the following sections to simpler and more tractable problem settings which allow for theoretical and as well as empirical analysis.

\subsection{Multi-Armed Bandit}

In this section, we begin with the formal specification of a multi-armed bandit problem~\citep{lai1985asymptotically,bubeck2012regret,lattimore2020bandit} before presenting Thompson Sampling as a quintessential algorithm for identifying optimal actions. We then present a corresponding generalization of Thompson Sampling that takes an agent's capacity limitations into account.

\subsubsection{Problem Formulation}


We obtain a bandit environment as a special case of the problem formulation given in Section \ref{sec:continual} by treating the initial observation as null $O_0 = \emptyset$ while each subsequent observation denotes a reward signal $R_t \sim \rho(\cdot \mid A_t)$ drawn from an observation function $\rho: \mc{A} \ra \Delta(\bR)$ that only depends on the most recent action selection $A_t$ and not the current history $H_t = (A_1, R_1, A_2, R_2, \ldots, A_{t-1}, R_{t-1})$. While the actions $\mc{A}$ and total time periods $T \in \bN$ are known to the agent, the underlying reward function $\rho$ is unknown and, consequently, the environment $\mc{E}$ is itself a random variable such that $p(R_t \mid \mc{E}, A_t) = \rho(R_t \mid A_t).$ We let $\overline{\rho}: \mc{A} \ra [0,1]$ denote the mean reward function $\overline{\rho}(a) = \expect{R_t \mid A_t = a, \mc{E}}$, $\forall a \in \mc{A}$, and define an optimal action $A^\star \in \argmax\limits_{a \in \mc{A}} \overline{\rho}(a)$ as achieving the maximal mean reward denoted as $R^\star = \overline{\rho}(A^\star)$, both of which are random variables due to their dependence on $\mc{E}$.

Observe that, if the agent knew the underlying environment $\mc{E}$ exactly, there would be no uncertainty in the optimal action $A^\star$; consequently, it is the agent's epistemic uncertainty~\citep{der2009aleatory} in $\mc{E}$ that drives uncertainty in $A^\star$ and, since learning is the process of acquiring information, an agent explores to learn about the environment and reduce this uncertainty. As there is only a null history at the start $H_1 = \emptyset$, initial uncertainty in the environment $\mc{E} \in \Theta$ is given by the prior probabilities $\eta_1 \in \Delta(\Theta)$ while, as time unfolds, updated knowledge of the environment is reflected by posterior probabilities $\eta_t \in \Delta(\Theta)$.

For a fixed choice of environment $\mc{E}$, the performance of an agent is assessed through the regret of its policies over $T$ time periods $$\textsc{Regret}(\{\pi_t\}_{t \in [T]}, \mc{E}) = \expect{\sum\limits_{t=1}^T \left(\overline{\rho}(A^\star) - \overline{\rho}(A_t)\right) \mid \mc{E}}.$$
Since the environment is itself a random quantity, we integrate over this randomness with respect to the prior $\eta_1(\mc{E})$ to arrive at the Bayesian regret: $$\textsc{BayesRegret}(\{\pi_t\}_{t \in [T]}) = \expect{\textsc{Regret}(\{\pi_t\}_{t \in [T]}, \mc{E})} = \expect{\sum\limits_{t=1}^T \left(\overline{\rho}(A^\star) - \overline{\rho}(A_t)\right)}.$$
The customary goal within a multi-armed bandit problem is to identify an optimal action $A^\star$ and provably-efficient bandit learning emerges from algorithms whose Bayesian regret can be bounded from above. In the next section, we review one such algorithm that is widely used in practice before motivating consideration of satisficing solutions for bandit problems.

\subsubsection{Thompson Sampling \& Satisficing}

A standard choice of algorithm for addressing multi-armed bandit problems is Thompson Sampling (TS)~\citep{thompson1933likelihood,russo2018tutorial}, which has been well-studied both theoretically~\citep{auer2002finite,agrawal2012analysis,agrawal2013further,bubeck2013prior,russo2016information} and empirically~\citep{granmo2010solving,scott2010modern,chapelle2011empirical,gopalan2014thompson}. For convenience, we provide generic pseudocode for TS as Algorithm \ref{alg:ts}, whereas more granular classes of bandit problems (Bernoulli bandits or Gaussian bandits, for example) can often lead to more computationally explicit versions of TS that leverage special structure like conjugate priors (see \citep{russo2018tutorial} for more detailed implementations). In each time period $t \in [T]$, a TS agent proceeds by drawing one sample $\theta_t \sim \eta_t(\mc{E})$, representing a statistically-plausible hypothesis about the underlying environment based on the agent's current posterior beliefs from observing the history $H_t$; the agent then proceeds as if this sample dictates reality and acts optimally with respect to it, drawing an action to execute this time period $A_t$ uniformly at random among the optimal actions for this realization of $\mc{E} = \theta_t$ of the environment. Executing actions in this manner recovers the hallmark probability-matching principle~\citep{scott2010modern,russo2016information} of TS whereby, in each time period $t \in [T]$, the agent selects actions according to their (posterior) probability of being optimal given everything observed up to this point in $H_t$ or, more formally, $\pi_t(a) = p_t(A^\star = a)$, $\forall a \in \mc{A}$.

Aside from admitting a simple, computationally-efficient procedure for learning optimal actions $A^\star$ over time, TS also boasts rigorous theoretical guarantees. While the classic Gittins' indices~\citep{gittins1979bandit,gittins2011multi} yield the Bayes-optimal policy, they are extremely limited to problems of modest size such that, for our finite-horizon setting, they are computationally intractable. Nevertheless, \citet{russo2016information} offer a rigorous corroborating analysis of TS that, for our setting, yields an information-theoretic Bayesian regret bound: $$\textsc{BayesRegret}(\{\pi^{\text{TS}}_t\}_{t \in [T]}) \leq \sqrt{\frac{1}{2}|\mc{A}|\bH_1(A^\star)T} \leq \sqrt{\frac{1}{2}|\mc{A}|\log\left(|\mc{A}|\right)T}.$$ This result communicates that the overall Bayesian regret of TS is governed by the entropy over the optimal arm $A^\star$ under the prior $\eta_1(\mc{E})$. When an agent designer has strong prior knowledge about the optimal arm, initializing TS accordingly results in a very small upper bound on Bayesian regret; conversely, in the case of an uninformative prior, the worst-case entropy over the optimal arm is equal to $\log(|\mc{A}|)$ and the second inequality is tight, which still matches the best-known regret lower bound $\Omega(\sqrt{|\mc{A}|T})$ for multi-armed bandit problems up to logarithmic factors~\citep{bubeck2013prior}.

Naturally, a core premise of this work is to consider decision-making problems where an agent's inherent and unavoidable capacity limitations drastically impact the tractability of learning optimal actions. While there are other classes of algorithms for handling multi-armed bandit problems~\citep{auer2002finite,ryzhov2012knowledge,powell2012optimal,russo2014learning,russo2018learning}, TS serves an exemplary representative among them as it relentlessly pursues the optimal action $A^\star$, by design. Consider a human decision maker faced with a bandit problem containing $1,000,000,000$ (one trillion) arms -- does one genuinely expect any individual to successfully identify $A^\star$? Similarly, the final inequality in the Bayesian regret bound above informs us that the performance shortfall of TS will increase as the number of actions tends to $\infty$, quantifying the folly of pursuing $A^\star$ as the agent continuously experiments with untested but potentially optimal actions. 

\begin{center}
\begin{minipage}{0.48\textwidth}

\begin{algorithm}[H]
   \caption{Thompson Sampling (TS)~\citep{thompson1933likelihood}}
   \label{alg:ts}
\begin{algorithmic}
   \STATE {\bfseries Input:} Prior $p_1(\mc{E})$
   \FOR{$t \in [T]$}
   \STATE Sample $\theta_t \sim \eta_t(\mc{E})$
   \STATE $d(a,\theta_t) = \bE_t\left[\overline{\rho}(A_\star) - \overline{\rho}(a) \mid \mc{E} = \theta_t\right]$, $\forall a \in \mc{A}$
   \STATE $\pi_t = \text{Uniform}(\{a \in \mc{A} \mid d(a,\theta_t) = 0\})$
   \STATE Sample action $A_t \sim \pi_t$
   \STATE Observe reward $R_t$
   \STATE Update history $H_{t+1} = H_t \cup (A_t,R_t)$
   \ENDFOR
\end{algorithmic}
\end{algorithm}
\end{minipage}
\hfill
\begin{minipage}{0.48\textwidth}
\begin{algorithm}[H]
   \caption{Satisficing TS~\citep{russo2022satisficing}}
   \label{alg:sts}
\begin{algorithmic}
   \STATE {\bfseries Input:} Prior $p_1(\mc{E})$, Threshold $\eps \geq 0$
   \FOR{$t \in [T]$}
   \STATE Sample $\theta_t \sim \eta_t(\mc{E})$
   \STATE $d(a,\theta_t) = \bE_t\left[\overline{\rho}(A_\star) - \overline{\rho}(a) \mid \mc{E} = \theta_t\right]$, $\forall a \in \mc{A}$
   \STATE $\pi_t = \min(\{a \in \mc{A} \mid d(a,\theta_t) \leq \eps\})$
   \STATE Sample action $A_t \sim \pi_t$
   \STATE Observe reward $R_t$
   \STATE Update history $H_{t+1} = H_t \cup (A_t,R_t)$
   \ENDFOR
\end{algorithmic}
\end{algorithm}
\end{minipage}
\end{center}

Satisficing is a longstanding, well-studied idea about how to understand resource-limited cognition~\citep{simon1955behavioral,simon1956rational,newell1958elements,newell1972human,simon1982models} in which an agent settles for the first recovered solution that is deemed to be ``good enough,'' for some suitable notion of goodness. Inspired by this idea, \citet{russo2018satisficing,russo2022satisficing} present the Satisficing Thompson Sampling (STS) algorithm, which we present as Algorithm \ref{alg:sts}, to address the shortcomings of algorithms like TS that relentlessly pursue $A^\star$. STS employs a minimal adjustment to the original TS algorithm through a threshold parameter $\eps \geq 0$, which an agent designer may use to communicate that identifying a $\eps$-optimal action would be sufficient for their needs. The use of a minimum over all such $\eps$-optimal actions instead of a uniform distribution reflects the idea of settling for the first solution deemed to be ``good enough'' according to $\eps$. Naturally, the intuition follows that as $\eps$ increases and the STS agent becomes more permissive, such $\eps$-optimal actions can be found in potentially far fewer time periods than what is needed to obtain $A^\star$ through TS. If we define an analogous random variable to $A^\star$ as $A_\eps \sim \text{Uniform}(\{a \in \mc{A} \mid \bE_t\left[\overline{\rho}(A^\star) - \overline{\rho}(a) \mid \mc{E} = \theta_t\right] \leq \eps\})$ then STS simply employs probability matching as $\pi_t(a) = p_t(A_\eps = a)$, $\forall a \in \mc{A}$ and, as $\eps \downarrow 0$, recovers TS as a special case. \citet{russo2022satisficing} go on to prove a complementary information-theoretic regret bound for STS, which depends on $\bI_1(\mc{E}; A_\eps)$, rather than the entropy of $A^\star$, $\bH_1(A^\star)$. 

While it is clear that STS does embody the principle of satisficing for a capacity-limited decision maker, the $A_\eps$ action targeted by a STS agent instead of $A^\star$ only achieves some arbitrary and unspecified trade-off between the simplicity of what the agent set out to learn and the utility of the resulting solution, as $\eps$ varies. This is in contrast to a resource-rational approach~\citep{anderson1990adaptive,griffiths2015rational} which aims to instead strike the best trade-off between these two competing interests. One interpretation of the next section is that we provide a mathematically-precise characterization of such resource-rational solutions through rate-distortion theory.

\subsubsection{Rate-Distortion Theory for Target Actions}

To see how the rate-distortion function (Equation \ref{eq:continual_rdf}) fits into the preceding discussion of Thompson Sampling and STS, \citet{arumugam2021deciding} replace the $A_t$ of Equation \ref{eq:continual_rdf} with a target action $\widetilde{A}_t$. This notion of a target action based on the observation is that $A^\star = f(\mc{E})$ is merely a statistic of the environment whose computation is determined by some abstract function $f$. It follows that an alternative surrogate action an agent may prioritize during learning will be some other computable statistic of the environment that embodies a kind of trade-off between two key properties: (1) ease of learnability and (2) bounded sub-optimality or performance shortfall relative to $A^\star$. 

The previous section already gives two concrete examples of potential target actions, 
$A^\star$ and $A_\eps$, where the former represents an extreme point on the spectrum of potential learning targets as one that demands a potentially intractable amount of information to identify but comes with no sub-optimality. At the other end of the spectrum, there is simply the uniform random action $\overline{A} \sim \text{Uniform}(\mc{A})$ which requires no learning or sampling on the part of the agent to learn it but, in general, will likely lead to considerably large performance shortfall relative to an optimal solution. While, for any fixed $\eps > 0$, $A_\eps$ lives in between these extremes, it also suffers from two shortcomings of its own. Firstly, by virtue of satisficing and a willingness to settle for anything that is ``good enough,'' it is unclear how well $A_\eps$ balances between the two aforementioned desiderata. In particular, the parameterization of $A_\eps$ around $\eps$ as an upper bound to the expected regret suggests that there could exist an even simpler target action than $A_\eps$ that is also $\eps$-optimal but easier to learn insofar as it requires the agent obtain fewer bits of information from the environment. Secondly, from a computational perspective, a STS agent striving to learn $A_\eps$ (just as a TS agent does for learning $A^\star$) computes the same statistic repeatedly across all $T$ time periods. Meanwhile, with every step of interaction, the agent's knowledge of the environment $\mc{E}$ is further refined, potentially changing the outlook on what can be tractably learned in subsequent time periods. This suggests that one stands to have considerable gains by designing agents that adapt their learning target as knowledge of the environment accumulates, rather than iterating on the same static computation.

Recall that from Equation \ref{eq:continual_rdf}, a target action $\widetilde{A}_t$ following distribution $\delta_t(\widetilde{A}_t \mid \mc{E})$ achieves the rate-distortion limit given by
\begin{align}
    \mc{R}_t(D) &= \inf\limits_{p(\widetilde{A} \mid \mc{E})} \bI_t(\mc{E}; \widetilde{A}) \text{ such that } \bE_t\left[d(\widetilde{A}, \mc{E})\right] \leq D.\label{eq:target_action_rdf}
\end{align}

In order to satisfy the second desideratum of bounded performance shortfall for learning targets and to facilitate a regret analysis, \citet{arumugam2021deciding} define the distortion function as $$d(\tilde{a}, \theta) = \bE_t\left[\left(\overline{\rho}(A^\star) - \overline{\rho}(\tilde{a})\right)^2 \mid \mc{E} = \theta\right].$$ While having bounded expected distortion satisfies our second criterion for a learning target, the fact that $\widetilde{A}_t$ requires fewer bits of information to learn is immediately given by properties of the rate-distortion function $\mc{R}_t(D)$ itself, through Fact \ref{fact:rdf}.



\begin{center}
    \begin{minipage}{0.75\textwidth}
    \begin{algorithm}[H]
   \caption{Rate-Distortion Thompson Sampling (RDTS)}
   \label{alg:blasts}
\begin{algorithmic}
   \STATE {\bfseries Input:} Prior $\eta_1(\mc{E})$, Distortion threshold $D \geq 0$
   \FOR{$t \in [T]$}
   \STATE Compute $\delta_t(\widetilde{A}_t \mid \mc{E})$ that achieves $\mc{R}_t(D)$ limit (Equation \ref{eq:target_action_rdf})
   \STATE Sample $\theta_t \sim p_t(\mc{E})$
   \STATE Sample action $A_t \sim \delta_t(\widetilde{A}_t \mid \mc{E} = \theta_t)$
   \STATE Observe reward $R_t$
   \STATE Update history $H_{t+1} = H_t \cup (A_t,R_t)$
   \ENDFOR
\end{algorithmic}
\end{algorithm}
    \end{minipage}
\end{center}

Abstractly, one could consider a procedure like Algorithm \ref{alg:blasts} that, for an input distortion threshold $D$, identifies the corresponding target action $\widetilde{A}_t$ of Equation \ref{eq:target_action_rdf} and then performs probability matching with respect to it. The following theorem provides an information-theoretic Bayesian regret bound that generalizes the performance guarantee of traditional TS by \citet{russo2016information} while also providing a more direct connection to the rate-distortion function than Theorem 3 of \citet{arumugam2021deciding} using proof techniques developed by \citet{arumugam2022deciding}. 

\begin{theorem}
For any $D \geq 0$, $$\textsc{BayesRegret}(\{\pi^{\mathrm{RDTS}}_t\}_{t \in [T]}) \leq \sqrt{\frac{1}{2} |\mc{A}|T\mc{R}_1(D)} + T\sqrt{D}.$$
\label{thm:bandit_rdf_regret_bound}
\end{theorem}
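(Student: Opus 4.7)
The plan is to adapt the information-ratio argument of \citet{russo2016information} by sandwiching the surrogate target $\widetilde{A}_t$ between the optimal action $A^\star$ and the executed action $A_t$ in the regret decomposition, so that the distortion constraint controls the gap $A^\star \to \widetilde{A}_t$ and the rate constraint controls the information the agent must acquire to track $\widetilde{A}_t$. Concretely, I split each period's expected regret as
$$\bE_t[\overline{\rho}(A^\star)-\overline{\rho}(A_t)] = \bE_t[\overline{\rho}(A^\star)-\overline{\rho}(\widetilde{A}_t)] + \bE_t[\overline{\rho}(\widetilde{A}_t)-\overline{\rho}(A_t)]$$
and handle the two summands separately.

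The ``approximation'' piece is immediate: because $\delta_t$ is feasible for the rate-distortion program at time $t$, Jensen's inequality together with the definition of the squared distortion gives
$$\bE_t[\overline{\rho}(A^\star)-\overline{\rho}(\widetilde{A}_t)] \le \sqrt{\bE_t\bigl[(\overline{\rho}(A^\star)-\overline{\rho}(\widetilde{A}_t))^2\bigr]} = \sqrt{\bE_t[d(\widetilde{A}_t,\mc{E})]} \le \sqrt{D},$$
which summed over $T$ rounds contributes the additive $T\sqrt{D}$ term. For the ``learning'' piece I introduce the information ratio $\Gamma_t = (\bE_t[\overline{\rho}(\widetilde{A}_t)-\overline{\rho}(A_t)])^2 / \bI_t(\widetilde{A}_t;(A_t,R_t))$ and reproduce the $\Gamma_t \le |\mc{A}|/2$ bound of \citet{russo2016information} verbatim. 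The only property required is the probability-matching identity $p_t(A_t=a)=p_t(\widetilde{A}_t=a)$, which is built into Algorithm~\ref{alg:blasts}: $\theta_t$ is an independent posterior draw and $A_t \sim \delta_t(\cdot\mid\mc{E}=\theta_t)$, so both $A_t$ and $\widetilde{A}_t$ have marginal $\sum_{\theta} p_t(\theta)\delta_t(\cdot\mid\theta)$ given $H_t$. A single application of Cauchy-Schwarz then yields
$$\sum_{t=1}^T \bE[\overline{\rho}(\widetilde{A}_t)-\overline{\rho}(A_t)] \le \sqrt{\tfrac{|\mc{A}|}{2}\, T\, \sum_{t=1}^T \bE[\bI_t(\widetilde{A}_t;(A_t,R_t))]}.$$

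The substantive obstacle is the final step: bounding $\sum_t \bE[\bI_t(\widetilde{A}_t;(A_t,R_t))] \le \mc{R}_1(D)$. In the classical TS analysis with the fixed target $A^\star$ this follows trivially from the chain rule, $\sum_t \bE[\bI_t(A^\star;(A_t,R_t))] = \bI(A^\star;H_{T+1}) \le \bH_1(A^\star)$, but here $\widetilde{A}_t$ is regenerated every round from a different channel so the telescoping does not apply directly; in fact the naive data-processing bound $\bI_t(\widetilde{A}_t;(A_t,R_t)) \le \bI_t(\mc{E};(A_t,R_t))$ only yields $\bH_1(\mc{E})$, which is far too loose. My plan is to introduce a fixed ``anchor'' target $\widetilde{A}\sim\delta_1(\cdot\mid\mc{E})$ drawn once from the time-$1$ rate-distortion-optimal channel, observe that $\widetilde{A}$ is conditionally independent of $H_{T+1}$ given $\mc{E}$ (Markov chain $H_{T+1} \to \mc{E} \to \widetilde{A}$), and conclude from the data-processing inequality that $\bI(\widetilde{A};H_{T+1}) \le \bI_1(\mc{E};\widetilde{A}) = \mc{R}_1(D)$. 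The remaining technical work is to majorize each per-period $\bI_t(\widetilde{A}_t;(A_t,R_t))$ by the incremental information about $\widetilde{A}$ carried by $(A_t,R_t)$, so that the chain rule applied to $\widetilde{A}$ rather than $\widetilde{A}_t$ furnishes the desired aggregate bound; I expect this majorization to follow from the rate-distortion optimality of $\delta_t$ together with the convexity/monotonicity properties in Fact~\ref{fact:rdf}, following the techniques of \citet{arumugam2022deciding}. Assembling the approximation and learning pieces yields exactly $\sqrt{\tfrac{1}{2}|\mc{A}|T\mc{R}_1(D)}+T\sqrt{D}$, and one verifies the classical TS bound $\sqrt{\tfrac{1}{2}|\mc{A}|T\bH_1(A^\star)}$ as the special case $D=0$.
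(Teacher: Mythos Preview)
Your high-level plan---the regret decomposition into an approximation piece and a learning piece, the $T\sqrt{D}$ bound via Jensen and the distortion constraint, the information-ratio bound $\Gamma_t\le|\mc{A}|/2$ from probability matching, and the Cauchy--Schwarz aggregation---matches the paper's proof exactly. The gap is in your proposed route to $\sum_t \bE[\bI_t(\widetilde{A}_t;(A_t,R_t))]\le\mc{R}_1(D)$. The fixed-anchor idea does not work as stated: there is no reason the per-period information gain about the time-varying target $\widetilde{A}_t$ should be termwise dominated by that about a frozen anchor $\widetilde{A}\sim\delta_1(\cdot\mid\mc{E})$. Writing both via the identity $\bI_t(X;(A_t,R_t))=\bI_t(\mc{E};X)-\bE_t[\bI_{t+1}(\mc{E};X)]$, rate-distortion optimality of $\delta_t$ gives $\bI_t(\mc{E};\widetilde{A}_t)=\mc{R}_t(D)\le\bI_t(\mc{E};\widetilde{A})$, which pushes the first terms the \emph{wrong} way, and you have no handle on how the $\bI_{t+1}$ terms compare. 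The anchor also need not be distortion-feasible at time $t$ for a given $H_t$, only in expectation.

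The paper (this is precisely the technique of \citet{arumugam2022deciding} you cite) dispenses with any anchor and telescopes the rate-distortion function itself. The one-step lemma is
\[
\bE_t\bigl[\mc{R}_{t+1}(D)\bigr]\;\le\;\mc{R}_t(D)-\bI_t\bigl(\widetilde{A}_t;(A_t,R_t)\bigr),
\]
proved as follows: since $\bE_t[d(\widetilde{A}_t,\mc{E})]\le D$, monotonicity of $\mc{R}_{t+1}$, the tower property $\bE_t[d]=\bE_t[\bE_{t+1}[d]]$, and Jensen applied to the convex map $D'\mapsto\mc{R}_{t+1}(D')$ (all from Fact~\ref{fact:rdf}) give $\bE_t[\mc{R}_{t+1}(D)]\le\bE_t[\mc{R}_{t+1}(\bE_{t+1}[d(\widetilde{A}_t,\mc{E})])]\le\bE_t[\bI_{t+1}(\mc{E};\widetilde{A}_t)]$, the last step because the time-$t$ channel is a feasible (if suboptimal) channel for the time-$(t{+}1)$ problem at its own distortion level. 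Combining with $\bI_t(\widetilde{A}_t;(A_t,R_t))=\mc{R}_t(D)-\bE_t[\bI_{t+1}(\mc{E};\widetilde{A}_t)]$ yields the lemma; summing over $t$ and taking expectations telescopes to $\mc{R}_1(D)$. So the convexity and monotonicity you invoke are exactly the right ingredients, but they should be applied to $\mc{R}_t(D)$ across rounds rather than to a comparison between $\widetilde{A}_t$ and a frozen anchor.
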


When $D = 0$ and the agent designer is not willing to tolerate any sub-optimality relative to $A^\star$, Fact \ref{fact:rdf} allows this bound to recover the guarantee of TS exactly. At the other extreme, increasing $D$ to 1 (recall that mean reward are bounded in $[0,1]$) allows $\mc{R}_1(D) = 0$ and the agent has nothing to learn from the environment but also suffers the linear regret of $T$. Naturally, the ``sweet spot'' is to entertain intermediate values of $D$ where smaller values will lead to larger amounts of information $\mc{R}_1(D)$ needed to identify the corresponding target action, but not as many bits as what learning $A^\star$ necessarily entails.

Just as in the previous subsection, it may often be sensible to also consider a scenario where an agent designer is unable to precisely specify a reasonable threshold on expected distortion $D$ and can, instead, only characterize a limit on the amount of information an agent may acquire from the environment $R > 0$. One might interpret this as a notion of capacity which differs quite fundamentally from other notions examined in prior work~\citep{lai2021policy,gershman2021rational} (see Section \ref{sec:disc} for a more in-depth comparison). For this, we may consider the distortion-rate function 
\begin{align}
    \mc{D}_t(R) = \inf\limits_{p(\widetilde{A} \mid \mc{E})} \bE_t\left[d(\widetilde{A},\mc{E})\right] \text{ such that } \bI_t(\mc{E}; \widetilde{A}) \leq R,
    \label{eq:target_action_drf}
\end{align}
which quantifies the fundamental limit of lossy compression subject to a rate constraint, rather than the distortion threshold of $\mc{R}(D)$. Similar to the rate-distortion function, however, the distortion rate function also adheres to the three properties outlined in Fact \ref{fact:rdf}. More importantly, it is the inverse of the rate-distortion function such that $\mc{R}_t(\mc{D}_t(R)) = R$ for any $t \in [T]$ and $R > 0$. Consequently, by selecting $D = \mc{D}_1(R)$ as input to Algorithm \ref{alg:blasts}, we immediately recover the following corollary to Theorem \ref{thm:bandit_rdf_regret_bound} that provides an information-theoretic Bayesian regret bound in terms of agent capacity, rather than a threshold on expected distortion.

\begin{corollary}
For any $R > 0$, $$\textsc{BayesRegret}(\{\pi^{\mathrm{RDTS}}_t\}_{t \in [T]}) \leq \sqrt{\frac{1}{2} |\mc{A}|TR} + T\sqrt{\mc{D}_1(R)}.$$
\label{thm:bandit_drf_regret_bound}
\end{corollary}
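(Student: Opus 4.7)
The plan is to proceed by a direct reduction to Theorem \ref{thm:bandit_rdf_regret_bound}, which already bounds the Bayesian regret of RDTS in terms of a distortion threshold $D$. Since the distortion-rate function $\mc{D}_t(R)$ and the rate-distortion function $\mc{R}_t(D)$ are inverses of one another (stated in the paragraph preceding the corollary, namely $\mc{R}_t(\mc{D}_t(R)) = R$ for all $t \in [T]$ and $R > 0$), the natural move is to run Algorithm \ref{alg:blasts} with the distortion threshold set to $D = \mc{D}_1(R)$.

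First, I would verify that this choice of $D$ is admissible as input to Algorithm \ref{alg:blasts}: by Fact \ref{fact:rdf} applied to $\mc{D}_1$, it is non-negative, and since $R > 0$, the constraint set in Equation \ref{eq:target_action_drf} is nonempty (the trivial channel $p(\widetilde{A} \mid \mc{E}) = p(\widetilde{A})$ has zero mutual information), so $\mc{D}_1(R)$ is well-defined and finite. Second, I would instantiate the bound of Theorem \ref{thm:bandit_rdf_regret_bound} at this $D$, yielding
\[
\textsc{BayesRegret}(\{\pi^{\mathrm{RDTS}}_t\}_{t \in [T]}) \leq \sqrt{\tfrac{1}{2} |\mc{A}|T\,\mc{R}_1(\mc{D}_1(R))} + T\sqrt{\mc{D}_1(R)}.
\]
Third, I would apply the inverse identity $\mc{R}_1(\mc{D}_1(R)) = R$ to collapse the first term into $\sqrt{\frac{1}{2}|\mc{A}|TR}$, yielding the claimed inequality.

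There is essentially no obstacle here beyond ensuring the inverse identity is invoked correctly; the only subtle point is that $\mc{R}_1$ and $\mc{D}_1$ are only strict inverses when they are strictly decreasing, so in degenerate cases (e.g., where $\mc{R}_1$ is flat on an interval) one might instead only have $\mc{R}_1(\mc{D}_1(R)) \leq R$. Fortunately this inequality is in the correct direction for the bound, so the conclusion still follows. This is why the authors label the statement a corollary and characterize the derivation as ``immediate.''
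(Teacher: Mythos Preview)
Your proposal is correct and matches the paper's own derivation exactly: the paper states that the corollary follows immediately from Theorem~\ref{thm:bandit_rdf_regret_bound} by choosing $D = \mc{D}_1(R)$ and invoking the inverse relationship $\mc{R}_1(\mc{D}_1(R)) = R$. Your additional remark that in degenerate (non-strictly-decreasing) regimes one only has $\mc{R}_1(\mc{D}_1(R)) \leq R$, which still suffices, is a nice robustness check that the paper does not spell out.
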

The semantics of this performance guarantee are identical to those of Theorem \ref{thm:bandit_rdf_regret_bound}, only now expressed explicitly through the agent's capacity $R$. Namely, when the agent has no capacity for learning $R = 0$, $D_1(R) = 1$ and the agent incurs linear regret of $T$. Conversely, with sufficient capacity $R = \bH_1(A^\star)$, $D_1(R) = 0$ and we recover the regret bound of Thompson Sampling. Intermediate values of agent capacity will result in an agent that fully utilizes its capacity to acquire no more than $R$ bits of information from the environment, resulting in the minimum possible expected distortion quantified by $\mc{D}_1(R)$.

While a non-technical reader of this section should remain unencumbered by the mathematical minutia of these theoretical results, the salient takeaway is an affirmation that rate-distortion theory not only provides an intuitive and mathematically-precise articulation of capacity-limited Bayesian decision-making in multi-armed bandits, but also facilitates the design of a complementary algorithm for statistically-efficient learning. The next section proceeds to illustrate how these theoretical results hold up in practice.  


\subsubsection{Experiments}

In order to make the algorithm of the previous section (Algorithm \ref{alg:blasts}) amenable to practical implementation, \citet{arumugam2021deciding} look to the classic Blahut-Arimoto algorithm~\citep{blahut1972computation,arimoto1972algorithm}. Just as TS and STS perform probability matching with respect to $A^\star$ and $A_\eps$ in each time period, respectively, the Blahut-Arimoto STS (BLASTS) algorithm (presented as Algorithm \ref{alg:value_blasts} where one should recall that reward maximization and regret minimization are equivalent) conducts probability matching with respect to $\widetilde{A}_t$ in each time period to determine the policy: $\pi_t(a) = p_t(\widetilde{A}_t = a)$, $\forall a \in \mc{A}$. For two discrete random variables representing an uncompressed information source and the resulting lossy compression, the Blahut-Arimoto algorithm computes the channel that achieves the rate-distortion limit (that is, achieve the infimum in Equation \ref{eq:target_action_rdf}) by iterating alternating update equations until convergence. More concretely, the algorithm is derived by optimizing the Lagrangian of the constrained optimization~\citep{boyd2004convex} that is the rate-distortion function, which is itself known to be a convex optimization problem~\citep{chiang2004geometric}. We refer readers to \citep{arumugam2021deciding} for precise computational details of the Blahut-Arimoto algorithm for solving the rate-distortion function $\mc{R}_t(D)$ that yields $\widetilde{A}_t$ as well as \citep{arumugam2021the} for details on the exact theoretical derivation. 

One salient detail that emerges from using the Blahut-Arimoto algorithm in this manner is that it an agent designer's no longer specifies a distortion threshold $D \in \bR_{\geq 0}$ as input but, instead, provides a value of the Lagrange multiplier $\beta \in \bR_{\geq 0}$; lower values of $\beta$ communicate a preferences for rate minimization whereas larger values of $\beta$ prioritize distortion minimization. To each value of $\beta$, there is an associate distortion threshold $D$ as $\beta$ represents the desired slope achieved along the corresponding rate-distortion curve~\citep{blahut1972computation,csiszar1974computation,csiszar1974extremum}. As, in practice, $\eta_t(\mc{E})$ tends to be a continuous distribution, \citet{arumugam2021deciding} induce a discrete information source by drawing a sufficiently large number of Monte-Carlo samples and leveraging the resulting empirical distribution, which turns out to be a theoretically sound estimator of the true rate-distortion function~\citep{harrison2008estimation,palaiyanur2008uniform}. 

As these target actions $\{\widetilde{A}_t\}_{t \in [T]}$ are born out of a need to balance the simplicity and utility of what an agent aims to learn from its interactions within the environment, we can decompose empirical results into those that affirm these two criteria are satisfied in isolation. Since assessing utility or, equivalently, performance shortfall is a standard evaluation metric used throughout the literature, we begin there and offer regret curves in Figure \ref{fig:bandit_regret} for Bernoulli and Gaussian bandits with 10 independent arms (matching, for example, the empirical evaluation of \citet{russo2018learning}); recall that the former implies Bernoulli rewards $R_t \sim \text{Bernoulli}(\overline{\rho}(A_t))$ while the latter yields Gaussian rewards with unit variance $R_t \sim \mc{N}(\overline{\rho}(A_t), 1)$. We evaluate TS and BLASTS agents where, for the latter, the Lagrange multiplier hyperparameter $\beta \in \bR_{\geq 0}$ is fixed and tested over a broad range of values. All agents begin with a $\text{Beta}(1,1)$ prior for each action of the Bernoulli bandit and a $\mc{N}(0,1)$ prior for the Gaussian bandit. For each individual agent, the cumulative regret incurred by the agent is plotted over each time period $t \in [T]$. 

Recalling that our distortion function is directly connected to the expected regret of the BLASTS agent, we observe that smaller values of $\beta$ so aggressively prioritize rate minimization that the resulting agents incur linear regret; in both bandit environments, this trend persists for all values $\beta \leq 100$. Notably, as $\beta \uparrow \infty$, we observe the resulting agents yield performance more similar to regular TS. This observation aligns with expectations since, for a sufficiently large value of $\beta$, the Blahut-Arimoto algorithm will proceed to return a channel that only places probability mass on the distortion-minimizing actions, which are indeed, the optimal actions $A^\star$ for each realization of the environment. A notable auxiliary finding in these results, also seen in the original experiments of \citet{arumugam2021deciding}, is that intermediate values of $\beta$ manage to yield regret curves converging towards the optimal policy more efficiently that TS; this is, of course, only possible when the distortion threshold $D$ implied by a particular setting of $\beta$ falls below the smallest action gap of the bandit problem.

\begin{figure}[H]
\centering
\begin{subfigure}{.5\textwidth}
  \centering
  \includegraphics[width=.85\linewidth]{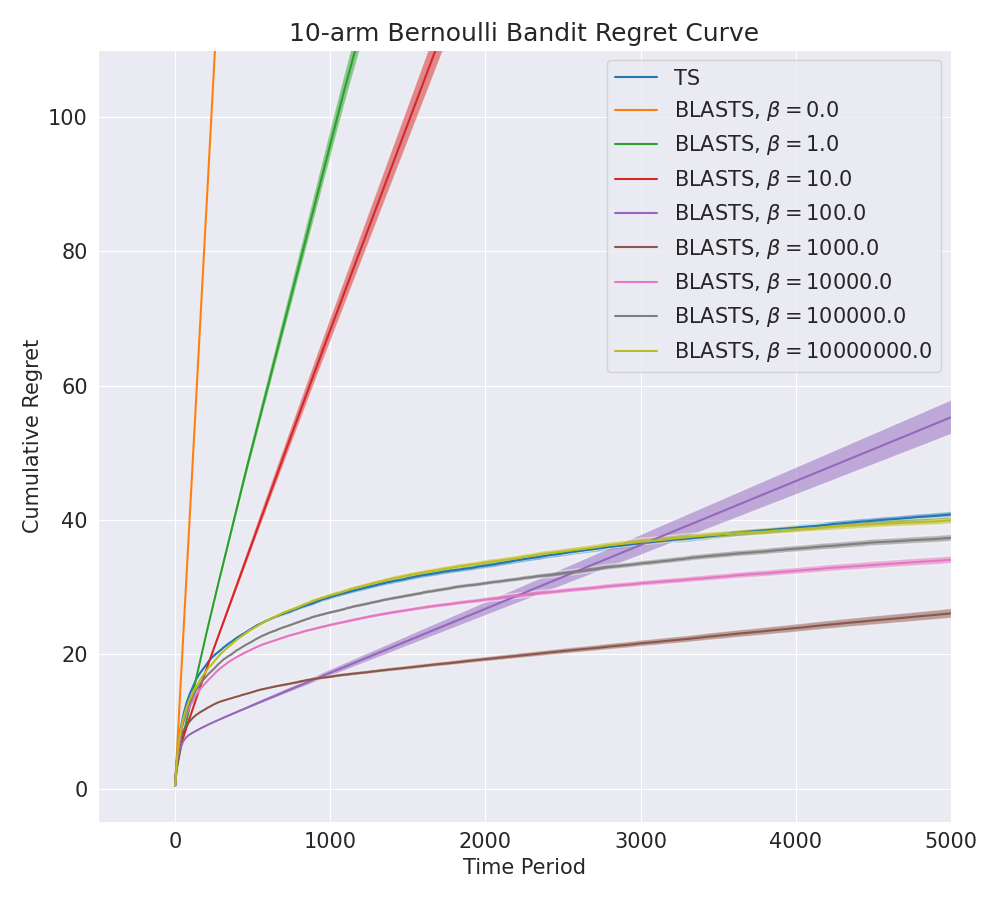}
\end{subfigure}%
\begin{subfigure}{.5\textwidth}
  \centering
  \includegraphics[width=.85\linewidth]{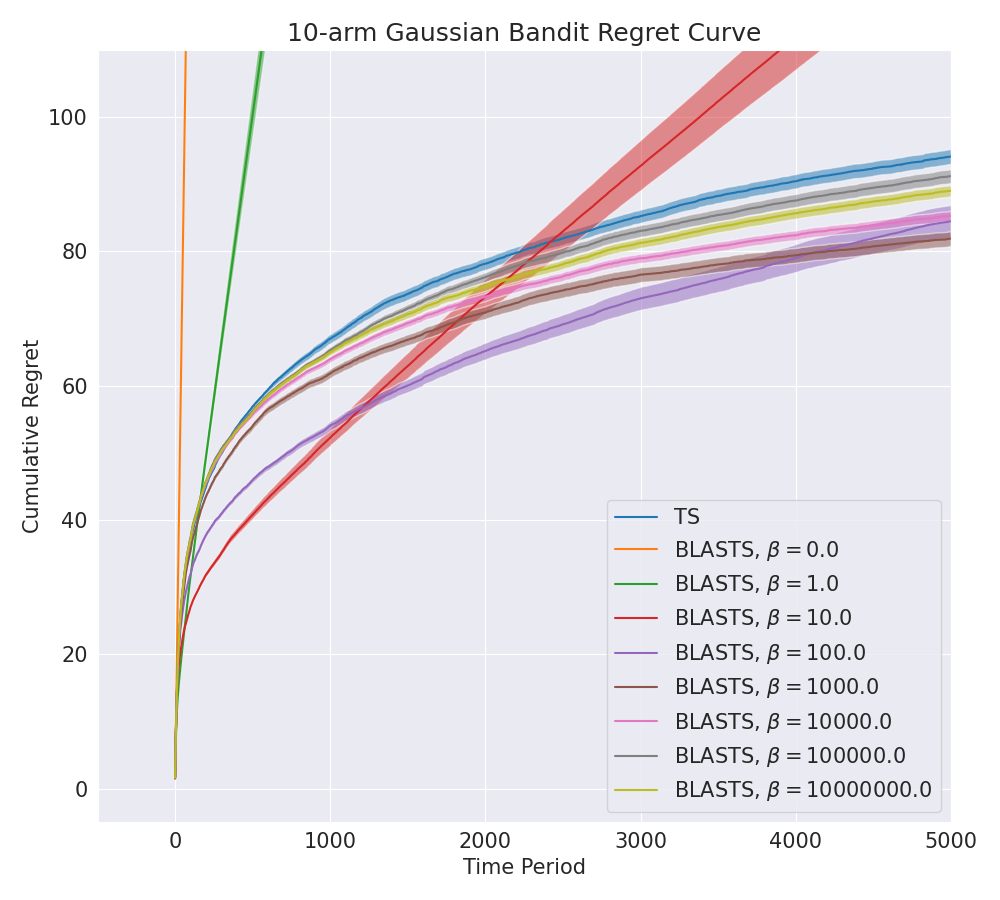}
\end{subfigure}
\caption{Cumulative regret curves for Bernoulli and Gaussian bandits with 10 independent arms comparing traditional Thompson Sampling (TS) against Blahut-Arimoto STS (BLASTS), sweeping over the $\beta$ hyperparameter of the latter.}
\label{fig:bandit_regret}
\end{figure}

While the previous experiments confirm that BLASTS can be used to instantiate a broad spectrum of agents that target actions of varying utilities, it is difficult to assess the simplicity of these targets and discern whether or not less-performant target actions can in fact be identified more quickly than near-optimal ones. As a starting point, one might begin with the agent's prior over the environment and compute $\bI_1(\mc{E}; \widetilde{A}_t)$ to quantify how much information each agent's initial learning target requires from the environment \textit{a priori}. In Figure \ref{fig:bandit_rd_curve}, we compare this to $\bI_1(\mc{E}; A_\eps)$ and sweep over the respective $\beta$ and $\eps$ values to generate the result rate-distortion curves for Bernoulli and Gaussian bandits with 1000 independent arms. The results corroborate earlier discussion of how a STS agent engages with a learning target $A_\eps$ that yields \textit{some} trade-off between ease of learnability and performance, but not necessarily the best trade-off. In contrast, since $\mc{R}_1(D) \approx \bI_1(\mc{E}; \widetilde{A}_t)$ (where the approximation is due to sampling), we expect and do indeed recover a better trade-off between rate and performance using the Blahut-Arimoto algorithm. To verify that target actions at the lower end of the spectrum (lower rate and higher distortion) can indeed by learned more quickly, we can plot the rate of the channel $\delta_t(\widetilde{A}_t \mid \mc{E})$ computed by BLASTS across time periods, as shown in Figure \ref{fig:bandit_rate}; for TS, we additionally plot the entropy over the optimal action $\bH_t(A^\star)$ as time passes and observe that smaller values of $\beta$ lead to learning targets with smaller initial rates that decay much more quickly than their counterparts at larger values of $\beta$. Again, as $\beta \uparrow \infty$, these rate curves concentrate around that of regular TS. 

\begin{figure}[H]
\centering
\begin{subfigure}{.5\textwidth}
  \centering
  \includegraphics[width=.85\linewidth]{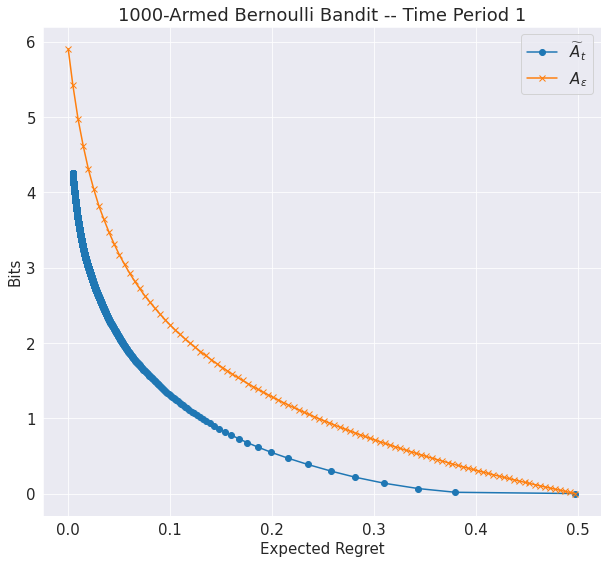}
\end{subfigure}%
\begin{subfigure}{.5\textwidth}
  \centering
  \includegraphics[width=.85\linewidth]{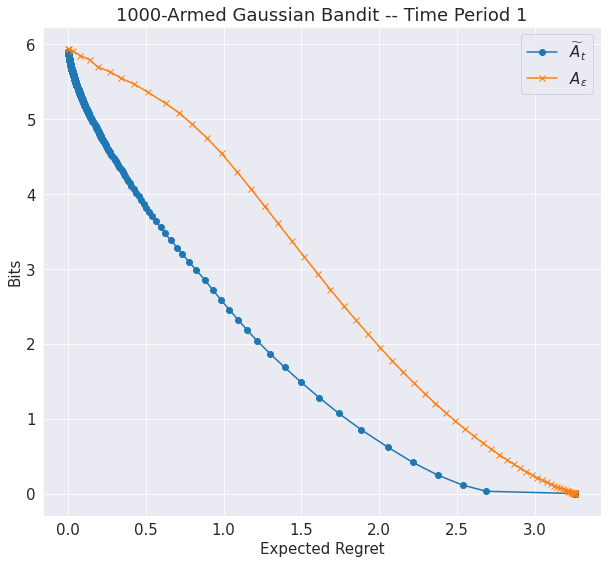}
\end{subfigure}
\caption{Rate-distortion curves for target actions computed via BLASTS ($\widetilde{A}_t$) and STS ($A_\eps$) in the first time periods of Bernoulli and Gaussian bandits with 1000 independent arms.}
\label{fig:bandit_rd_curve}
\end{figure}

\begin{figure}[H]
\centering
\begin{subfigure}{.5\textwidth}
  \centering
  \includegraphics[width=.85\linewidth]{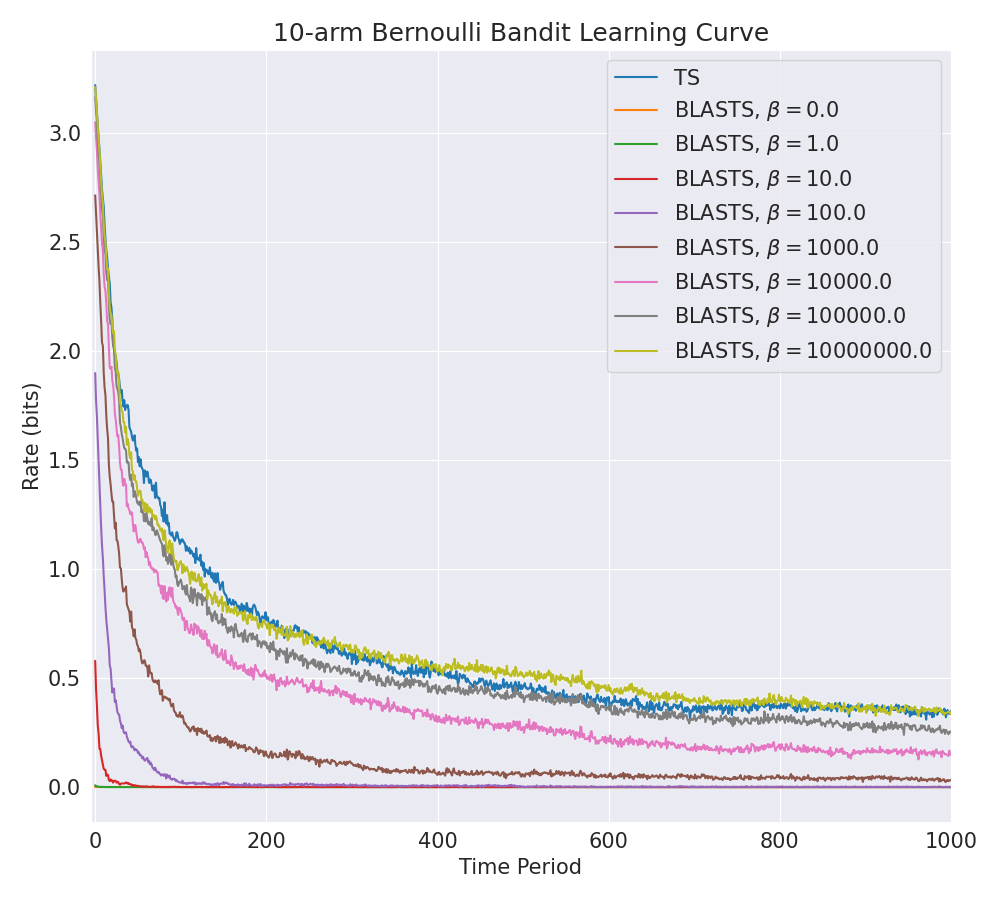}
\end{subfigure}%
\begin{subfigure}{.5\textwidth}
  \centering
  \includegraphics[width=.85\linewidth]{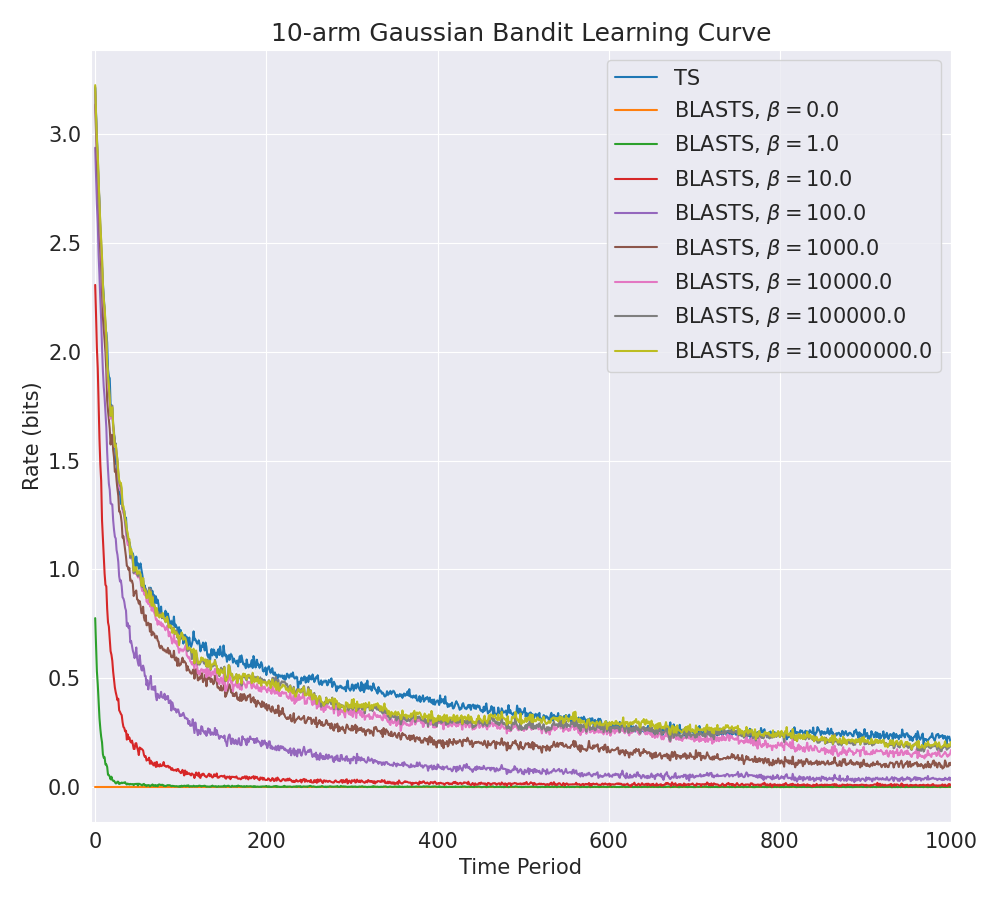}
\end{subfigure}
\caption{Rate curves for Bernoulli and Gaussian bandits with 10 independent arms comparing traditional Thompson Sampling (TS) against Blahut-Arimoto STS (BLASTS), sweeping over the $\beta$ hyperparameter of the latter.}
\label{fig:bandit_rate}
\end{figure}

Overall, this section has provided an overview of prior work that moves past the standard goal of finding optimal actions $A^\star$ in multi-armed bandit problems and towards capacity-limited decision-making agents. Extending beyond the empirical findings observed in these prior works, we provide additional experiments (see Figure \ref{fig:bandit_rate}) that show how the minimization of rate leads to target actions that are simpler to learn, allowing for an agent to curtail its interactions with the environment in fewer time periods and respect limitations on time and computational resources. Crucially, rate-distortion theory emerges as a natural conduit for identifying target actions that balance between respecting an agent's limits while still being sufficiently useful for the task at hand. In the next section, we extend this line of thinking to the episodic reinforcement-learning problem and survey recent theoretical results in this space that, analogous to Theorem \ref{thm:bandit_rdf_regret_bound} and Corollary \ref{thm:bandit_drf_regret_bound}, set the stage for subsequent empirical investigations into their practical veracity for both biological and artificial decision-making agents.

\subsection{Episodic Reinforcement Learning}

In this section, we again specialize the general problem formulation of Section \ref{sec:continual}, this time by introducing the assumption of episodicity commonly made throughout the reinforcement-learning literature. Just as in the preceding section, Thompson Sampling will again reappear as a quintessential algorithm for addressing exploration under an additional assumption that planning across any world model is always computationally feasible. Under this caveat, we survey existing theoretical results which accommodate capacity-limited agents via rate-distortion theory.

\subsubsection{Problem Formulation}

We formulate a sequential decision-making problem as an episodic, finite-horizon Markov Decision Process (MDP)~\citep{bellman1957markovian,Puterman94} defined by $\mc{M} = \langle \mc{S}, \mc{A}, \mc{U}, \mc{T}, \beta, H \rangle$. Here $\mc{S}$ denotes a set of states, $\mc{A}$ is a set of actions, $\mc{U}:\mc{S} \times \mc{A} \ra [0,1]$ is a deterministic reward or utility function providing evaluative feedback signals, $\mc{T}:\mc{S} \times \mc{A} \ra \Delta(\mc{S})$ is a transition function prescribing distributions over next states, $\beta \in \Delta(\mc{S})$ is an initial state distribution, and $H \in \bN$ is the maximum length or horizon. Within each one of $K \in \bN$ episodes, the agent acts for exactly $H$ steps beginning with an initial state $s_1 \sim \beta$. For each timestep $h \in [H]$, the agent observes the current state $s_h \in \mc{S}$, selects action $a_h \sim \pi_h(\cdot \mid s_h) \in \mc{A}$, enjoys a reward $r_h = \mc{U}(s_h,a_h) \in [0,1]$, and transitions to the next state $s_{h+1} \sim \mc{T}(\cdot \mid s_h, a_h) \in \mc{S}$.

A stationary, stochastic policy for timestep $h \in [H]$, $\pi_h:\mc{S} \ra \Delta(\mc{A})$, encodes behavior as a mapping from states to distributions over actions. Letting $\Pi \triangleq \{\mc{S} \ra \Delta(\mc{A})\}$ denote the class of all stationary, stochastic policies, a non-stationary policy $\pi = (\pi_1,\ldots,\pi_H) \in \Pi^H$ is a collection of exactly $H$ stationary, stochastic policies whose overall performance in any MDP $\mc{M}$ at timestep $h \in [H]$ when starting at state $s \in \mc{S}$ and taking action $a \in \mc{A}$ is assessed by its associated action-value function $Q^\pi_{\mc{M},h}(s,a) = \bE\left[\sum\limits_{h'=h}^H \mc{U}(s_{h'},a_{h'}) \bigm| s_h = s, a_h = a\right]$, where the expectation integrates over randomness in the action selections and transition dynamics. Taking the corresponding value function as $V^\pi_{\mc{M},h}(s) = \bE_{a \sim \pi_h(\cdot \mid s)}\left[Q^\pi_{\mc{M},h}(s,a)\right]$, we define the optimal policy $\pi^\star = (\pi^\star_1,\pi^\star_2,\ldots,\pi^\star_H)$ as achieving supremal value $V^\star_{\mc{M},h}(s) = \sup\limits_{\pi \in \Pi^H} V^\pi_{\mc{M},h}(s)$ for all $s \in \mc{S}$, $h \in [H]$. 
We let $\tau_k = (s^{(k)}_1, a^{(k)}_1, r^{(k)}_1, \ldots,s^{(k)}_{H}, a^{(k)}_{H}, r^{(k)}_{H}, s^{(k)}_{H+1})$ be the random variable denoting the trajectory experienced by the agent in the $k$th episode. Meanwhile, $H_k = \{\tau_1,\tau_2,\ldots, \tau_{k-1}\} \in \mc{H}_k$ is the random variable representing the entire history of the agent's interaction within the environment at the start of the $k$th episode. 

As is standard in Bayesian reinforcement learning~\citep{bellman1959adaptive,duff2002optimal,ghavamzadeh2015bayesian}, neither the transition function nor the reward function are known to the agent and, consequently, both are treated as random variables. An agent's initial uncertainty in the (unknown) true MDP $\mc{M}^\star = (\mc{U}^\star, \mc{T}^\star)$ is reflected by a prior distribution $p_1(\mc{M}^\star)$. As the agent's history of interaction within the environment unfolds, updated knowledge of the underlying MDP is reflected by posterior probabilities $p_k(\mc{M}^\star)$. Since the regret is a random variable due to our uncertainty in $\mc{M}^\star$, we integrate over this randomness to arrive at the Bayesian regret over $K$ episodes: $$\textsc{BayesRegret}(\{\pi^{(k)}\}_{k \in [K]}) = \bE\left[\textsc{Regret}(\{\pi^{(k)}\}_{k \in [K]}, \mc{M}^\star)\right] = \bE\left[\sum\limits_{k=1}^K \left( V^\star_{\mc{M}^\star,1}(s_1) - V^{\pi^{(k)}}_{\mc{M}^\star, 1}(s_1)\right)\right].$$

Just as in the previous section but with a slight abuse of notation, we will use $p_k(X) = p(X \mid H_k)$ as shorthand notation for the conditional distribution of any random variable $X$ given a random realization of an agent's history $H_k \in \mc{H}$, at any episode $k \in [K]$. Furthermore, we will denote the entropy and conditional entropy conditioned upon a specific realization of an agent's history $H_k$, for some episode $k \in [K]$, as $\bH_k(X) \triangleq \bH(X \mid H_k = H_k)$ and $\bH_k(X \mid Y) \triangleq \bH_k(X \mid Y, H_k = H_k)$, for two arbitrary random variables $X$ and $Y$. This notation will also apply analogously to mutual information: $\bI_k(X;Y) \triangleq \bI(X;Y \mid H_k = H_k) = \bH_k(X) - \bH_k(X \mid Y) = \bH_k(Y) - \bH_k(Y \mid X).$ We reiterate that a reader should interpret this as recognizing that, while standard information-theoretic quantities average over all associated random variables, an agent attempting to quantify information for the purposes of exploration does so not by averaging over all possible histories that it could potentially experience, but rather by conditioning based on the particular random history $H_k$ that it has currently observed thus far. The dependence on the realization of a random history $H_k$ makes $\bI_k(X;Y)$ a random variable and the usual conditional mutual information arises by integrating over this randomness: $\bE\left[\bI_k(X;Y)\right] = \bI(X;Y \mid H_k).$ Additionally, we will also adopt a similar notation to express a conditional expectation given the random history $H_k$: $\bE_k\left[X\right] \triangleq \bE\left[X|H_k\right].$

\subsubsection{Posterior Sampling for Reinforcement Learning}

A natural starting point for addressing the exploration challenge in a principled manner is via Thompson Sampling~\citep{thompson1933likelihood,russo2018tutorial}. The Posterior Sampling for Reinforcement Learning (PSRL)~\citep{strens2000bayesian,osband2013more,osband2014model,abbasi2014bayesian,agrawal2017optimistic,osband2017posterior,lu2019information} algorithm (given as Algorithm \ref{alg:psrl}) does this by, in each episode $k \in [K]$, sampling a candidate MDP $\mc{M}_k \sim p_k(\mc{M}^\star)$ and executing its optimal policy in the environment $\pi^{(k)} = \pi^\star_{\mc{M}_k}$; notably, such posterior sampling guarantees the hallmark probability-matching principle of Thompson Sampling: $p_k(\mc{M}_k = M) = p_k(\mc{M}^\star = M)$, $\forall M \in \mathfrak{M}, k \in [K]$. The resulting trajectory $\tau_k$ leads to a new history $H_{k+1} = H_k \cup \tau_k$ and an updated posterior over the true MDP $p_{k+1}(\mc{M}^\star)$. 

\begin{center}
\begin{minipage}{0.41\textwidth}
\vspace{-39pt}
\begin{algorithm}[H]
   \caption{Posterior Sampling for Reinforcement Learning (PSRL)~\citep{strens2000bayesian}}
   \label{alg:psrl}
\begin{algorithmic}
   \STATE {\bfseries Input:} Prior $p_1(\mc{M}^\star)$
   \FOR{$k \in [K]$}
   \STATE Sample $M_k \sim p_k(\mc{M}^\star)$
   \STATE Get optimal policy $\pi^{(k)} = \pi^\star_{M_k}$
   \STATE Execute $\pi^{(k)}$ and get trajectory $\tau_k$
   \STATE Update history $H_{k+1} = H_k \cup \tau_k$
   \STATE Induce posterior $p_{k+1}(\mc{M}^\star)$
   \ENDFOR
\end{algorithmic}
\end{algorithm}
\end{minipage}
\hfill
\begin{minipage}{0.58\textwidth}
\begin{algorithm}[H]
   \caption{Value-equivalent Sampling for Reinforcement Learning (VSRL)~\citep{arumugam2022deciding}}
   \label{alg:vsrl}
\begin{algorithmic}
   \STATE {\bfseries Input:} Prior $p_1(\mc{M}^\star)$, Threshold $D \in \bR_{\geq 0}$, Distortion function $d: \mathfrak{M} \times \mathfrak{M} \ra \bR_{\geq 0}$
   \FOR{$k \in [K]$}
   \STATE Compute $\widetilde{\mc{M}}_k$ achieving $\mc{R}_k(D)$ limit (Equation \ref{eq:target_mdp_rdf})
   \STATE Sample MDP $M^\star \sim p_k(\mc{M}^\star)$
   \STATE Sample compression $M_k \sim p(\widetilde{\mc{M}}_k \mid \mc{M}^\star = M^\star)$
   \STATE Compute optimal policy $\pi^{(k)} = \pi^\star_{M_k}$
   \STATE Execute $\pi^{(k)}$ and observe trajectory $\tau_k$
   \STATE Update history $H_{k+1} = H_k \cup \tau_k$
   \STATE Induce posterior $p_{k+1}(\mc{M}^\star)$
   \ENDFOR
\end{algorithmic}
\end{algorithm}
\end{minipage}
\end{center}

Unfortunately, for complex environments, pursuit of the exact MDP $\mc{M}^\star$ may be an entirely infeasible goal, akin to pursuing an optimal action $A^\star$ within a multi-armed bandit problem. A MDP representing control of a real-world, physical system, for example, suggests that learning the associated transition function requires the agent internalize laws of physics and motion with near-perfect accuracy. More formally, identifying $\mc{M}^\star$ demands the agent obtain exactly $\bH_1(\mc{M}^\star)$ bits of information from the environment which, under an uninformative prior, may either be prohibitively large by far exceeding the agent's capacity constraints or be simply impractical under time and resource constraints~\citep{lu2021reinforcement}. 

\subsubsection{Rate-Distortion Theory for Target MDPs}

To remedy the intractabilities imposed by PSRL when an agent must contend with an overwhelmingly-complex environment, we once again turn to rate-distortion theory as a tool for defining an information-theoretic surrogate than an agent may use to prioritize its information acquisition strategy in lieu of $\mc{M}^\star$. If one were to follow the rate-distortion optimization of Equation \ref{eq:continual_rdf}, this would suggest identifying a channel $\delta_t(\pi^{(k)} \mid \mc{M}^\star)$ that directly maps a bounded agent's beliefs about $\mc{M}^\star$ to a behavior policy $\pi^{(k)}$ for use in the current episode $k \in [K]$. For the purposes of analysis, \citet{arumugam2022deciding} instead perform lossy MDP compression with the interpretation that various facets of the true MDP $\mc{M}^\star$ must be discarded by a capacity-limited agent who can only hope identify a simplified world model that strives to retain as many salient details as possible. Implicit to such an approach is an assumption that the act of planning (that is, mapping any MDP $M \in \mathfrak{M}$ to its optimal policy $\pi^\star_M$) can always be done in a computationally-efficient manner irrespective of the agent's capacity limitations. From a mechanistic perspective, this is likely implausible for both artificial agents in large-scale, high-dimensional environments of interest as well as biological agents~\citep{ho2022people}. On the other hand, this construction induces a Markov chain $\mc{M}^\star - \widetilde{\mc{M}} - \pi^{(k)}$, where $\widetilde{\mc{M}}$ denotes the compressed world model; by the data-processing inequality, we have for all $k \in [K]$ that $\bI_k(\mc{M}^\star; \pi^{(k)}) \leq \bI_k(\mc{M}^\star; \widetilde{\mc{M}})$, such that minimizing the rate of the lossy MDP compression must also limit the amount of information that flows from the agent's beliefs about the world to the executed behavior policy.

For the precise details of this MDP compression, we first require (just as with any lossy compression problem) the specification of an information source to be compressed as well as a distortion function that quantifies the loss of fidelity between uncompressed and compressed values. Akin to the multi-armed bandit setting, we will take the agent's current beliefs $p_k(\mc{M}^\star)$ as the information source to be compressed in each episode. Unlike in the bandit setting, however, the choice of distortion function $d:\mathfrak{M} \times \mathfrak{M} \ra \bR_{\geq 0}$ presents an opportunity for the agent designer to be judicious in specifying which aspects of the environment are preserved in the agent's compressed view of the world.

It is fairly well accepted that human beings do not model all facets of the environment when making decisions~\citep{simon1956rational,gigerenzer1996reasoning} and the choice of which details are deemed salient enough to warrant retention in the mind of an agent is precisely governed by the choice of distortion function. In the computational reinforcement-learning literature, this reality has called into question longstanding approaches to model-based reinforcement learning~\citep{sutton1991dyna,sutton1998introduction,littman2015reinforcement} which use standard maximum-likelihood estimation techniques that endeavor to learn the exact model $(\mc{U},\mc{T})$ that governs the underlying MDP. The end result has been a flurry of recent work~\citep{silver2017predictron,farahmand2017value,oh2017value,asadi2018lipschitz,farahmand2018iterative,grimm2020value,d2020gradient,abachi2020policy,cui2020control,ayoub2020model,schrittwieser2020mastering,nair2020goal,grimm2021proper,nikishin2022control,voelcker2022value,grimm2022approximate} which eschews the traditional maximum-likelihood objective in favor of various surrogate objectives which restrict the focus of the agent's modeling towards specific aspects of the environment. As the core goal of endowing a decision-making agent with its own internal model of the world is to facilitate model-based planning~\citep{bertsekas1995dynamic}, central among these recent approaches is the value-equivalence principle~\citep{grimm2020value,grimm2021proper,grimm2022approximate} which provides mathematical clarity on how surrogate models can still enable lossless planning relative to the true model of the environment. 

For any arbitrary MDP $\mc{M}$ with model $(\mc{U},\mc{T})$ and any stationary, stochastic policy $\pi:\mc{S} \ra \Delta(\mc{A})$, define the Bellman operator $\mc{B}^\pi_\mc{M}: \{\mc{S} \ra \bR\} \ra \{\mc{S} \ra \bR\}$ as follows: $$\mc{B}^\pi_\mc{M}V(s) \triangleq \bE_{a \sim \pi(\cdot \mid s)}\left[\mc{U}(s,a) + \bE_{s' \sim \mc{T}(\cdot \mid s, a)}\left[ V(s')\right]\right].$$ The Bellman operator is a foundational tool in dynamic-programming approaches to reinforcement learning~\citep{bertsekas1995dynamic} and gives rise to the classic Bellman equation: for any MDP $\mc{M} = \langle \mc{S}, \mc{A}, \mc{U}, \mc{T}, \beta, H \rangle$ and any non-stationary policy $\pi = (\pi_1,\ldots,\pi_H)$, the value functions induced by $\pi$ satisfy $V^\pi_{\mc{M},h}(s) = \mc{B}^{\pi_h}_{\mc{M}}V^\pi_{\mc{M},h+1}(s),$ for all $h \in [H]$ and with $V^\pi_{\mc{M},H+1}(s) = 0$, $\forall s \in \mc{S}$. For any two MDPs $\mc{M} = \langle \mc{S}, \mc{A}, \mc{U}, \mc{T}, \beta, H \rangle$ and $\widehat{\mc{M}} = \langle \mc{S}, \mc{A}, \widehat{\mc{U}}, \widehat{\mc{T}}, \beta, H \rangle$, \citet{grimm2020value} define a notion of equivalence between them despite their differing models. For any policy class $\Pi \subseteq \{\mc{S} \ra \Delta(\mc{A})\}$ and value function class $\mc{V} \subseteq \{\mc{S} \ra \bR\}$, $\mc{M}$ and $\widehat{\mc{M}}$ are value equivalent with respect to $\Pi$ and $\mc{V}$ if and only if $\mc{B}^\pi_{\mc{M}}V = \mc{B}^\pi_{\widehat{\mc{M}}}V$, $\forall \pi \in \Pi, V \in \mc{V}.$ In words, two different models are deemed value equivalent if they induce identical Bellman updates under any pair of policy and value function from $\Pi \times \mc{V}$. \citet{grimm2020value} prove that when $\Pi = \{\mc{S} \ra \Delta(\mc{A})\}$ and $\mc{V} = \{\mc{S} \ra \bR\}$, the set of all exactly value-equivalent models is a singleton set containing only the true model of the environment. By recognizing that the ability to plan over all arbitrary behaviors is not necessarily in the agent's best interest and restricting focus to decreasing subsets of policies $\Pi \subset \{\mc{S} \ra \Delta(\mc{A})\}$ and value functions $\mc{V} \subset \{\mc{S} \ra \bR\}$, the space of exactly value-equivalent models is monotonically increasing. 

Still, however, exact value equivalence still presumes that an agent has the capacity for planning with complete fidelity to the true environment; more plausibly, an agent may only have the resources to plan in an approximately-value-equivalent manner~\citep{grimm2022approximate}. For brevity, let $\mathfrak{R} \triangleq \{\mc{S} \times \mc{A} \ra [0,1]\}$ and $\mathfrak{T} \triangleq \{\mc{S} \times \mc{A} \ra \Delta(\mc{S})\}$ denote the classes of all reward functions and transition functions, respectively. Recall that, with $\langle \mc{S}, \mc{A}, \beta, H \rangle$ all known, the uncertainty in a random MDP $\mc{M}$ is entirely driven by its model $(\mc{R},\mc{T})$ such that we may think of the support of $\mc{M}^\star$ as $\text{supp}(\mc{M}^\star) = \mathfrak{M} \triangleq \mathfrak{R} \times \mathfrak{T}$. We define a distortion function on pairs of MDPs $d:\mathfrak{M} \times \mathfrak{M} \ra \bR_{\geq 0}$ for any $\Pi \subseteq \{\mc{S} \ra \Delta(\mc{A})\}$, $\mc{V} \subseteq \{\mc{S} \ra \bR\}$ as $$d_{\Pi,\mc{V}}(\mc{M},\widehat{\mc{M}}) = \sup\limits_{\substack{\pi \in \Pi \\ V \in \mc{V}}} ||\mc{B}^\pi_{\mc{M}}V - \mc{B}^\pi_{\widehat{\mc{M}}}V||_\infty^2 = \sup\limits_{\substack{\pi \in \Pi \\ V \in \mc{V}}} \left(\sup\limits_{s \in \mc{S}} |\mc{B}^\pi_{\mc{M}}V(s) - \mc{B}^\pi_{\widehat{\mc{M}}}V(s)| \right)^2.$$ In words, $d_{\Pi,\mc{V}}$ is the supremal squared Bellman error between MDPs $\mc{M}$ and $\widehat{\mc{M}}$ across all states $s \in \mc{S}$ with respect to the policy class $\Pi$ and value function class $\mc{V}$. With an information source and distortion function defined, \citet{arumugam2022deciding} employ the following rate-distortion function that articulates the lossy MDP compression a capacity-limited decision agent performs to identify a simplified MDP to pursue instead of $\mc{M}^\star$: 
\begin{align}
    \mc{R}_k(D) &= \inf\limits_{p(\widetilde{\mc{M}} \mid \mc{M}^\star)} \bI_k(\mc{M}^\star; \widetilde{\mc{M}}) \text{ such that } \bE_k[d(\mc{M}^\star, \widetilde{\mc{M}})] \leq D.
    \label{eq:target_mdp_rdf}
\end{align}
By definition, the target MDP $\widetilde{\mc{M}}_k$ that achieves this rate-distortion limit will demand that the agent acquire fewer bits of information than what is needed to identify $\mc{M}^\star$. Once again, by virtue of Fact \ref{fact:rdf}, this claim is guaranteed for all $k \in [K]$ and any $D > 0$: $\mc{R}_k(D) \leq \mc{R}_k(0) \leq \bI_k(\mc{M}^\star; \mc{M}^\star) = \bH_k(\mc{M}^\star)$. Crucially, however, the use of the value-equivalence principle in the distortion function ensures that agent capacity is allocated towards preserving the regions of the world model needed to plan over behaviors as defined through $\Pi,\mc{V}$. \citet{arumugam2022deciding} establish an information-theoretic Bayesian regret bound for a posterior-sampling algorithm (given as Algorithm \ref{alg:vsrl}) that performs probability matching with respect to $\widetilde{\mc{M}}_k$ in each episode $k \in [K]$, instead of $\mc{M}^\star$: $\textsc{BayesRegret}(\{\pi^{(k)}\}_{k \in [K]}) \leq \sqrt{\overline{\Gamma}K\mc{R}_1(D)} + 2KH\sqrt{D}$, where $\overline{\Gamma} < \infty$ is an uniform upper bound to the information ratio~\citep{russo2016information,russo2014learning,russo2018learning} that emerges as a technical assumption for the analysis; a reader should interpret this $\overline{\Gamma}$ as a sort of conversion factor communicating the worst case number of units of squared regret incurred by the agent per bit of information acquired from the environment. 

Just as with the BLASTS algorithm for the multi-armed bandit setting, this VSRL algorithm directly couples an agent's exploratory choices in each episode to the epistemic uncertainty it maintains over the resource-rational learning target $\widetilde{\mc{M}}_k$ which it aspires to learn. The bound communicates that an agent with limited capacity must tolerate a higher distortion threshold $D$ and pursue the resulting compressed MDP that bears less fidelity to the original MDP; in exchange, the resulting number of bits needed from the environment to identify such a simplified model of the world is given as $\mc{R}_1(D)$ and guaranteed to be less than the entropy of $\mc{M}^\star$. Additionally, just as with the regret bound for BLASTS, one can express a near-identical result through the associated distortion-rate function. In particular, this encourages a particular notion of agent capacity as a limit $R \in \bR_{\geq 0}$ on the number of bits an agent may obtain from its interactions with the environment. Subject to this constraint, the fundamental limit on the amount of expected distortion incurred is given by
\begin{align}
    \mc{D}_t(R) = \inf\limits_{p(\widetilde{\mc{M}} \mid \mc{M}^\star)} \bE_k[d(\mc{M}^\star, \widetilde{\mc{M}})] \text{ such that } \bI_k(\mc{M}^\star; \widetilde{\mc{M}}) \leq R.
    \label{eq:target_mdp_drf}
\end{align}
Embracing this distortion-rate function and taking the VSRL distortion threshold as $D = \mc{D}_1(R)$ allows for a performance guarantee 
that explicitly accounts for the agent capacity limits: $\textsc{BayesRegret}(\{\pi^{(k)}\}_{k \in [K]}) \leq \sqrt{\overline{\Gamma}KR} + 2KH\sqrt{\mc{D}_1(R)}.$

In summary, under a technical assumption of episodicity for the purposes of analysis, the theoretical results surveyed in this section parallel those of the preceding section for multi-armed bandits. While computational experiments for this episodic reinforcement learning setting have not yet been established due to the computational efficiency of running the Blahut-Arimoto algorithm for such a lossy MDP compression problem, the core takeaway of this section is that there is strong theoretical justification for using these tools from rate-distortion theory to empirically study capacity-limited sequential decision-making agents.

\section{Discussion}
\label{sec:disc}

In this paper, we have introduced capacity-limited Bayesian reinforcement learning, capturing a novel perspective on lifelong learning under a limited cognitive load while also surveying existing theoretical and algorithmic advances specific to multi-armed bandits~\citep{arumugam2021deciding} and reinforcement learning~\citep{arumugam2022deciding}. Taking a step back, we now situate our contributions in a broader context by reviewing related work on capacity-limited cognition as well as information-theoretic reinforcement learning. As our framework sits at the intersection of Bayesian inference, reinforcement learning, and rate-distortion theory, we use this opportunity to highlight particularly salient pieces of prior work that sit at the intersection Bayesian inference and rate-distortion theory as well as the intersection of reinforcement learning and rate-distortion theory, respectively. Furthermore, while the algorithms discussed in this work all operationalize the Blahut-Arimoto algorithm and Thompson Sampling as the primary mechanisms for handling rate-distortion optimization and exploration respectively, we also discuss opportunities to expand to more sophisticated strategies for computing target actions and exploring once a target action has been determined. Lastly, we conclude our discussion by returning to a key assumption used throughout this work that an agent consistently maintains idealized beliefs about the environment $\mc{E}$ through perfect Bayesian inference. 

\subsection{Related Work on Learning, Decision-Making, and Rate-Distortion Theory}



There is a long, rich literature exploring the natural limitations on time, knowledge, and cognitive capacity faced by human (and animal) decision makers~\citep{simon1956rational,newell1958elements,newell1972human,shugan1980cost,simon1982models,gigerenzer1996reasoning,vul2014one,griffiths2015rational,gershman2015computational,icard2015resource,lieder2020resource,amir2020value,bhui2021resource,brown2022humans,ho2022people,prystawski2022resource,binz2022modeling}. Crucially, our focus is on a recurring theme throughout this literature of modeling these limitations on cognitive capabilities as being information-theoretic in nature~\citep{sims2003implications,peng2005learning,parush2011dopaminergic,botvinick2015reinforcement,sims2016rate,sims2018efficient,zenon2019information,ho2020efficiency,gershman2020reward,gershman2020origin,mikhael2021rational,lai2021policy,gershman2021rational,jakob2022rate,bari2022undermatching}. 

Broadly speaking and under the episodic reinforcement learning formulation of the previous section, these approaches all center around the perspective that a policy $\pi_h: \mc{S} \ra \Delta(\mc{A})$ should be modeled as a communication channel that, like a human decision-maker with limited information processing capability, is subject to a constraint on the maximal number of bits that may be sent across it. Consequently, an agent aspiring to maximize returns must do so subject to this constraint on policy complexity; conversely, an agent ought to transmit the minimum amount of information possible while it endeavors to reach a desired level of performance~\citep{polani2009information,polani2011informational,tishby2011information,rubin2012trading}. Paralleling the distortion-rate function $\mc{D}(R)$, the resulting policy-optimization objective follows as $\sup\limits_{\pi \in \{\mc{S} \ra \Delta(\mc{A})\}^H} \bE\left[Q^\pi(S, A)\right] \text{ such that } \bI(S; A) \leq R.$ It is important to acknowledge that such a formulation sits directly at the intersection of reinforcement learning and rate-distortion theory without invoking any principles of Bayesian inference. Depending on the precise work, subtle variations on this optimization problem exist from choosing a fixed state distribution for the random variable $S$~\citep{polani2009information,polani2011informational},  incorporating the state visitation distribution of the policy being optimized~\citep{still2012information,gershman2020origin,lai2021policy}, or assuming access to the generative model of the MDP and decomposing the objective across a finite state space~\citep{tishby2011information,rubin2012trading}. In all of these cases, the end empirical result tends to converge by also making use of variations on the classic Blahut-Arimoto algorithm to solve the Lagrangian associated with the constrained optimization~\citep{boyd2004convex} and produce policies that exhibit higher entropy across states under an excessively limited rate $R$, with a gradual convergence towards the greedy optimal policy as $R$ increases. 

The alignment between this optimization problem and that of the distortion-rate function is slightly wrinkled by the non-stationarity of the distortion function (here, $Q^\pi$ is used as an analogue to distortion which changes as the policy or channel does) and, when using the policy visitation distribution for $S$, the non-stationarity of the information source. Despite these slight, subtle mismatches with the core rate-distortion problem, the natural synergy between cognitive and computational decision making~\citep{tenenbaum2011grow,lake2017building} has led to various reinforcement-learning approaches that draw direct inspiration from this line of thinking~\citep{klyubin2005empowerment,ortega2011information,still2012information,ortega2013thermodynamics,shafieepoorfard2016rationally,tiomkin2017unified,goyal2018infobot,lerch2018policy,lerch2019rate,abel2019state,goyal2020variational,goyal2020reinforcement}, most notably including parallel connections to work on ``control as inference'' or KL-regularized reinforcement learning~\citep{todorov2007linearly,toussaint2009robot,kappen2012optimal,levine2018reinforcement,ziebart2010modeling,fox2016taming,haarnoja2017reinforcement,haarnoja2018soft,galashov2019information,tirumala2019exploiting}. Nevertheless, despite their empirical successes, such approaches lack principled mechanisms for addressing the exploration challenge~\citep{o2020making}. In short, the key reason behind this is that the incorporation of Bayesian inference allows for a separation of reducible or epistemic uncertainty that exists due to an agent's lack of knowledge versus irreducible or aleatoric uncertainty that exists due to the natural stochasticity that may exist within a random outcome~\citep{der2009aleatory}. Without leveraging a Bayesian setting, a random variable denoting an agent's belief about the environment $\mc{E}$ or underlying MDP $\mc{M}^\star$ no longer exists and a channel like the ones explored throughout this work from beliefs to action cease to exist. That said, the notion of rate preserved by these methods has been shown to constitute a reasonable notion of policy complexity~\citep{lai2021policy} and future work may benefit from combining the two approaches.

Similar to human decision making~\citep{gershman2018deconstructing,schulz2019algorithmic,gershman2019uncertainty}, provably-efficient reinforcement-learning algorithms have historically relied upon one of two possible exploration strategies: optimism in the face of uncertainty~\citep{kearns2002near,brafman2002r,kakade2003sample,auer2009near,bartlett2009regal,strehl2009reinforcement,jaksch2010near,dann2015sample,azar2017minimax,dann2017unifying,jin2018q,zanette2019tighter,dong2022simple} or posterior sampling~\citep{osband2013more,osband2017posterior,agrawal2017optimistic,lu2019information,lu2021reinforcement}. While both paradigms have laid down solid theoretical foundations, a line of work has demonstrated how posterior-sampling methods can be more favorable both in theory and in practice~\citep{osband2013more,osband2016deep,osband2016generalization,osband2017posterior,osband2019deep,dwaracherla2020hypermodels}. The theoretical results discussed in this work advance and further generalize this line of thinking through the concept of \textit{learning targets} (referred to in this work as target actions for clarity of exposition), introduced by \citet{lu2021reinforcement}, which opens up new avenues for entertaining solutions beyond optimal policies and conditioning an agent's exploration based on what it endeavors to learn from its environment, not unlike preschool children~\citep{cook2011science}. While this literature traditionally centers on consideration of a single agent interacting within its environment, generalizations to multiple agents acting concurrently while coupled through shared beliefs have been formalized and examined in theory as well as in practice~\citep{dimakopoulou2018coordinated,dimakopoulou2018scalable,chen2022society}; translating the ideas discussed here to further account for capacity limitations in that setting constitutes a promising direction for future work. 

Finally, we note while the work cited thus far was developed in the reinforcement learning community, the coupling of rate-distortion theory and Bayesian inference to strike a balance between the simplicity and utility of what an agent learns has been studied extensively by \citet{gottwald2019bounded}, who come from an information-theoretic background studying bounded rationality~\citep{ortega2011information,ortega2013thermodynamics}. Perhaps the key distinction between the work surveyed here and theirs is the further incorporation of reinforcement learning, which then provides a slightly more precise foundation upon which existing machinery can be repurposed to derive theoretical results like regret bounds. In contrast, the formulation of \citet{gottwald2019bounded} follows more abstract utility-theoretic decision making while also leveraging ideas from microeconomics and generalized beyond from standard Shannon information-theoretic quantities; we refer readers to their excellent, rigorous treatment of this topic.



\subsection{Generalizations to Other Families of Decision Rules}

The previous sections demonstrated several concrete implementations of capacity-limited Bayesian decision-making. We focused on BLASTS, an algorithm that generalizes Thompson Sampling, which itself is already a quintessential algorithm for navigating the explore-exploit tradeoff in a principled manner in multi-armed bandit and sequential decision-making problems. That said, however, we emphasize that BLASTS is only one particular instantiation of the framework espoused by the rate-distortion function of Equation \ref{eq:continual_rdf}. Here, we briefly sketch other directions in which the framework has been or could be applied.

First, the general framework of capacity-limited Bayesian decision-making can, in principle, be applied to any algorithm that, when supplied with beliefs about the environment and a particular target for learning, induces a policy to execute in the environment. For example, in \emph{information-directed sampling}, choices are made not only based on current beliefs about immediate rewards but also based on how actions produce informative consequences that can guide future behavior~\citep{russo2014learning,russo2018learning,lu2021reinforcement,hao2022contextual,hao2022regret}. This strategy motivates a decision-maker to engage in \emph{direct exploration} as opposed to \emph{random exploration} (Thompson Sampling being one example)~\citep{wilson2014humans} and better resolve the explore-exploit dilemma. Work by~\citet{arumugam2021the} has extended the BLASTS algorithm to develop variants of information-directed sampling that similarly minimize the rate between environment estimates and actions. Future work could explore even richer families of decision-rules such as those based on Bayes-optimal solutions over longer time horizons~\citep{duff2002optimal} and even ones that look past the KL-divergence as the core quantifier of information~\citep{lattimore2019information,zimmert2019connections,lattimore2021mirror}.

Additionally, BLASTS itself uses a seminal algorithm from the information-theory literature to ultimately address the rate-distortion optimization problem and find the decision-rule that optimally trades off reward and information---namely, the Blahut-Arimoto algorithm~\citep{blahut1972computation,arimoto1972algorithm}. However, this standard algorithm, while mathematically sound for random variables taking values on abstract spaces~\citep{csiszar1974extremum}, can only be made computationally tractable in the face of discrete random variables. Extending to general \emph{input} distributions (\textit{e.g.}, distributions with continuous or countable support) occurs through the use of an estimator with elegant theoretical properties such as asymptotic consistency~\citep{harrison2008estimation,palaiyanur2008uniform}. Despite this, it is still limited to \emph{output} distributions that have finite support. This limits its applicability to problems where the action space is finite and relatively small (even if the environment space is complex). Thus, an important direction for future research will be to develop algorithms for finding capacity-limited decision-rules based on versions of Blahut-Arimoto designed for general output distributions (\textit{e.g.}, particle filter-based algorithms~\citep{dauwels2005numerical}).

\subsection{Capacity-Limited Estimation and Alternative Information Bottlenecks}
Throughout this paper, we have assumed that environment estimation is not directly subject to capacity-limitations and that decision-makers perform perfect Bayesian inference. Naturally, however, this idealized scenario isn't guaranteed to hold for biological or artificial decision making agents. One high-level perspective on the core agent design problem addressed in this work is that decision-making agents cannot acquire unbounded quantities of information from the environment -- this reality motivates the need to prioritize information and rate-distortion theory emerges as a natural tool for facilitating such a prioritization scheme. 

By the same token, capacity-limited decision-making agents should also seldom find themselves capable of \textit{retaining} all bits of information uncovered about the underlying environment $\mc{E}$. If this were possible, then maintaining perfect belief estimates about the environment via $\eta_t$ would be a reasonable supposition. In reality, however, an agent must also be judicious in what pieces of environment information are actually retained. \citet{lu2021reinforcement} introduce terminology for discussing this limited corpus of world knowledge as an \textit{environment proxy}, $\widetilde{\mc{E}}$. The lack of fidelity between this surrogate and true environment $\mc{E}$ translates to the approximate nature of an agent's Bayesian inference when maintaining beliefs about $\widetilde{\mc{E}}$ in lieu of $\mc{E}$. For biological decision-making agents, the concept of a proxy seems intuitive as ``we are not interested in describing some physically objective world in its totality, but only those aspects of the totality that have relevance as the `life space' of the organism considered. Hence, what we call the `environment' will depend upon the `needs,' `drives,' or `goals' of the organism,'' as noted by Herbert Simon many decades ago~\citep{simon1956rational}.

Curiously, the relationship between the original environment $\mc{E}$ and this proxy $\widetilde{\mc{E}}$ can also be seen as a lossy compression problem where only a salient subset of the cumulative environment information need by retained by the agent for competent decision-making. Consequently, the associated 
rate-distortion function and the question of what suitable candidate notions of distortion apply may likely be an interesting object of study for future work. Practical optimization of such a rate-distortion function would likely benefit from recent statistical advances in empirical distribution compression~\citep{dwivedi2022generalized} to get away with representing the information source via a limited number of Monte-Carlo samples.

Finally, although consideration of capacity-limits on inference would extend the scope of the current framework, it is worth noting that recent findings in neuroscience support the possibility of a bottleneck on choice processes even if the bottleneck on inference is minimal. For example, when trained on stimuli presented at different angles, mice have been shown to discriminate orientations as low as $20^\circ$-$30^\circ$ based on \emph{behavioral} measures~\citep{abdolrahmani2019cognitive}. However, direct \emph{neural} measurements from visual processing regions reveal sensitivity to orientations as low as $0.37^\circ$~\citep{stringer2021high}. The higher precision (nearly $100\times$ higher) of sensory versus behavioral discrimination is consistent with a greater information bandwidth on inference compared to choice, as assumed in the current version of the model.
Similarly, work tracking the development of decision-making strategies in children provides evidence of capacity limits on choice processes even in the absence of limits on inference. For example, \cite{decker2016creatures} report that on a task designed to dissociate model-free versus model-based learning mechanisms, 8-12 year olds show signs of encoding changes in transition structure (longer reaction times) but do not appear to use this information to make better decisions, unlike 13-17 year olds and adults. This result is consistent with a distinct bottleneck between inference and action that may have a developmental trajectory. In short, the analyses developed in this paper may shed light on the general computational principles that underlie cases in which decision-makers display optimal inference but suboptimal choice. 

\subsection{Conclusion}
\label{sec:conc}
Our goal in this paper has been to review key insights from work on capacity-limited Bayesian decision-making by~\cite{arumugam2021deciding,arumugam2022deciding} and situate it within existing work on resource-rational cognition and decision-making~\citep{griffiths2015rational,lieder2020resource,gershman2015computational}. This discussion naturally leads to a number of questions, in particular, how the general framework presented can be applied to a wider range of algorithms, how other kinds of information bottlenecks could affect learning, and whether humans and other animals are capacity-limited Bayesian decision-makers. We hope that by formally outlining the different components of capacity-limited inference and choice, the current work can facilitate future cross-disciplinary investigations to address such topics.








\bibliographystyle{plainnat}
\bibliography{references}


\newpage
\appendix

\section{Proof of Theorem \ref{thm:bandit_rdf_regret_bound}}

We begin our analysis of Rate-Distortion Thompson Sampling by establishing the following fact, which also appears in the proof of Lemma 3 of \citep{arumugam2021deciding}:

\begin{fact}
For any target action $\tilde{A}$ and any time period $t \in [T]$, $$\bI_t(\tilde{A}; (A_t, O_{t+1})) = \bI_t(\mc{E}; \tilde{A}) - \bI_t(\mc{E}; \tilde{A} \mid A_t, O_{t+1}).$$
\label{fact:info_gain}
\end{fact}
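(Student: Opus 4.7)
The plan is to apply the chain rule for mutual information twice to $\bI_t(\tilde{A}; \mc{E}, A_t, O_{t+1})$ and exploit a conditional independence induced by the RDTS sampling scheme. Expanding first by grouping $\mc{E}$ and then $(A_t, O_{t+1})$ as the ``revealed'' variable yields the two expressions
$$\bI_t(\tilde{A}; \mc{E}) + \bI_t(\tilde{A}; A_t, O_{t+1} \mid \mc{E}) \;=\; \bI_t(\tilde{A}; \mc{E}, A_t, O_{t+1}) \;=\; \bI_t(\tilde{A}; A_t, O_{t+1}) + \bI_t(\tilde{A}; \mc{E} \mid A_t, O_{t+1}).$$
Rearranging gives the desired identity as soon as one can establish that $\bI_t(\tilde{A}; A_t, O_{t+1} \mid \mc{E}) = 0$, i.e.\ that $\tilde{A}$ is conditionally independent of $(A_t, O_{t+1})$ given $\mc{E}$ and $H_t$.

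To verify this conditional independence, I would appeal to the generative structure of Algorithm~\ref{alg:blasts}. Given $H_t$, the target action is drawn as $\tilde{A}_t \sim \delta_t(\cdot \mid \mc{E})$ using the true environment, whereas the executed action is drawn as $A_t \sim \delta_t(\cdot \mid \theta_t)$ from an independent posterior sample $\theta_t \sim p_t(\mc{E})$. Making the internal channel randomness explicit, $\tilde{A}_t$ and $A_t$ are generated from disjoint, independent noise sources, and $\theta_t$ itself is drawn independently of $\mc{E}$ given $H_t$. Hence $\tilde{A}_t \perp A_t \mid (\mc{E}, H_t)$. The observation $O_{t+1}$ is then produced by $\rho(\cdot \mid A_t)$ conditional on $\mc{E}$, so it contributes no additional coupling: $(A_t, O_{t+1}) \perp \tilde{A}_t \mid (\mc{E}, H_t)$, which gives the vanishing conditional mutual information.

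The only real obstacle is arguing this conditional independence cleanly, since the notation in the paper suppresses the underlying noise sources that drive $\tilde{A}_t$ and $A_t$; once that structural fact is pinned down via the posterior-sampling/probability-matching construction (or, equivalently, by noting that $\tilde{A}_t$ is a function of $(\mc{E}, U_1)$ and $A_t$ is a function of $(\theta_t, U_2)$ for independent $U_1, U_2$ independent of $\mc{E}$), the fact follows in a single line from the symmetry of the chain rule.
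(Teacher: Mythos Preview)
Your proposal is correct and follows essentially the same approach as the paper: both arguments expand $\bI_t(\tilde{A}; \mc{E}, A_t, O_{t+1})$ via the chain rule in two different orders and then invoke the conditional independence $\bI_t(\tilde{A}; A_t, O_{t+1} \mid \mc{E}) = 0$. The paper justifies this conditional independence more tersely (asserting $H_t \perp \tilde{A} \mid \mc{E}$ because ``no action-observation pair can offer more information about any target action $\tilde{A}$ than the environment $\mc{E}$ itself''), whereas you spell it out via the explicit generative structure of the sampling scheme with independent noise sources, which is arguably a cleaner justification of the same fact.
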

\begin{proof}
Recall that for any $t \in [T]$, $H_{t+1} = (H_t, A_t, O_{t+1})$. Moreover, no action-observation pair can offer more information about any target action $\tilde{A}$ than the environment $\mc{E}$ itself. Thus, we have that $\forall t \in [T]$, $H_t \perp \tilde{A} \mid \mc{E}$, which implies $\bI_t(\tilde{A}; (A_t, O_{t+1}) \mid \mc{E}) = 0$. By the chain rule of mutual information, $$\bI_t(\mc{E}; \tilde{A}) = \bI_t(\mc{E}; \tilde{A}) + \bI_t(\tilde{A}; (A_t, O_{t+1}) \mid \mc{E}) = \bI_t(\mc{E}, (A_t, O_{t+1}); \tilde{A}).$$ Applying the chain rule of mutual information a second time yields $$\bI_t(\mc{E}; \tilde{A}) = \bI_t(\mc{E}, (A_t, O_{t+1}); \tilde{A}) = \bI_t(\tilde{A}; (A_t, O_{t+1})) + \bI_t(\mc{E}; \tilde{A} \mid A_t, O_{t+1}).$$ Finally, simply re-arranging terms gives $$\bI_t(\tilde{A}; (A_t, O_{t+1})) = \bI_t(\mc{E}; \tilde{A}) - \bI_t(\mc{E}; \tilde{A} \mid A_t, O_{t+1}),$$ as desired.
\end{proof}

\begin{lemma}
For any $D > 0$ and all $t \in [T]$, $$\bE_t\left[\mc{R}_{t+1}(D)\right] \leq \mc{R}_t(D) - \bI_t(\tilde{A}_t; (A_t, O_{t+1})).$$
\label{lemma:exp_rdf_ub}
\end{lemma}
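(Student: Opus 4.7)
The plan is to exhibit the optimal time-$t$ channel as a (possibly suboptimal) candidate at time $t{+}1$, then apply Fact~\ref{fact:info_gain} to convert the resulting rate into the claimed bound. Let $\delta_t(\widetilde A_t \mid \mc E)$ denote the channel attaining the infimum in the definition of $\mc R_t(D)$, so that $\bI_t(\mc E;\widetilde A_t) = \mc R_t(D)$ and $\bE_t[d(\widetilde A_t,\mc E)] \le D$. Crucially, $\delta_t$ is a conditional distribution that depends on the realization $H_t$ only through the posterior $\eta_t$; it does not reference $(A_t, O_{t+1})$. This is precisely the Markov structure $H_t \perp \widetilde A_t \mid \mc E$ already exploited in the proof of Fact~\ref{fact:info_gain}, which also yields $(A_t, O_{t+1}) \perp \widetilde A_t \mid \mc E$ and hence lets us re-use the channel at step $t{+}1$ as a legitimate conditional distribution $p(\widetilde A_t\mid \mc E)$ against the updated source $\eta_{t+1}$.

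Next, I would invoke Fact~\ref{fact:info_gain} at $\widetilde A = \widetilde A_t$ to get
\begin{equation*}
\mc R_t(D) \;=\; \bI_t(\mc E;\widetilde A_t)
\;=\; \bI_t(\widetilde A_t;(A_t,O_{t+1})) + \bI_t(\mc E;\widetilde A_t \mid A_t, O_{t+1}),
\end{equation*}
and rearrange to obtain $\bI_t(\mc E;\widetilde A_t \mid A_t,O_{t+1}) = \mc R_t(D) - \bI_t(\widetilde A_t;(A_t,O_{t+1}))$. Using the identity relating conditional mutual information to the expectation of history-indexed mutual information (the analogue of $\bE[\bI_t(X;Y)] = \bI(X;Y\mid H_t)$ stated in Section~\ref{sec:prelims}, applied with the enlarged history $H_{t+1} = H_t \cup (A_t,O_{t+1})$), the left-hand side equals $\bE_t[\bI_{t+1}(\mc E;\widetilde A_t)]$. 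Thus the lemma reduces to the pointwise-in-$H_{t+1}$ claim $\mc R_{t+1}(D) \le \bI_{t+1}(\mc E;\widetilde A_t)$, after which taking $\bE_t[\cdot]$ closes the argument.

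The main obstacle is this reduction step. Using $\delta_t$ as a candidate channel at time $t{+}1$ gives rate exactly $\bI_{t+1}(\mc E;\widetilde A_t)$, but its distortion is $\bE_{t+1}[d(\widetilde A_t,\mc E)]$, and the tower property only guarantees $\bE_t\bigl[\bE_{t+1}[d(\widetilde A_t,\mc E)]\bigr] = \bE_t[d(\widetilde A_t,\mc E)] \le D$ on average over $H_{t+1}$, not for every realization. To patch feasibility, I would exploit Fact~\ref{fact:rdf}: the map $D\mapsto \mc R_{t+1}(D)$ is convex and non-increasing. On histories where the induced distortion dips below $D$, the pointwise bound holds for free; on histories where it exceeds $D$, one mixes $\delta_t$ with a zero-rate fallback channel (e.g.\ one that deterministically picks an action maximizing the posterior-mean reward) in just the right proportion to restore feasibility, bounding the rate cost of this mixture via the convexity of mutual information in the channel. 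Averaging over $H_{t+1}$, the extra mass placed on the fallback integrates to zero by the tower-property feasibility above, so the resulting expected rate is still $\bE_t[\bI_{t+1}(\mc E;\widetilde A_t)]$.

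Assembling the pieces, I would conclude $\bE_t[\mc R_{t+1}(D)] \le \bE_t[\bI_{t+1}(\mc E;\widetilde A_t)] = \mc R_t(D) - \bI_t(\widetilde A_t;(A_t,O_{t+1}))$, which is the claim. The information-theoretic identities are mechanical once the feasibility patching is in place; the hard part is the convexity/mixing argument that converts the on-average feasibility of $\delta_t$ into a pointwise upper bound on $\mc R_{t+1}(D)$.
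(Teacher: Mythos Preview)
Your skeleton is exactly the paper's: reuse $\delta_t$ at time $t{+}1$, apply Fact~\ref{fact:info_gain} to obtain $\bE_t[\bI_{t+1}(\mc E;\tilde A_t)]=\mc R_t(D)-\bI_t(\tilde A_t;(A_t,O_{t+1}))$, and reduce the lemma to $\bE_t[\mc R_{t+1}(D)]\le\bE_t[\bI_{t+1}(\mc E;\tilde A_t)]$. The gap is in your feasibility patch. A zero-rate fallback channel is an $\mc E$-independent action distribution; its distortion under $\eta_{t+1}$ is $\min_{\pi\in\Delta(\mc A)}\bE_{\tilde a\sim\pi}\bE_{t+1}[d(\tilde a,\mc E)]$, and nothing forces this to be $\le D$ on precisely those histories where $\bE_{t+1}[d(\tilde A_t,\mc E)]>D$. (Your suggested ``posterior-mean-reward maximizer'' certainly does not guarantee small distortion, since distortion here is expected \emph{squared} regret.) If the fallback itself exceeds $D$, no convex combination with $\delta_t$ lands at or below $D$, so you have no feasible candidate with which to upper-bound $\mc R_{t+1}(D)$. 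The claim that the ``extra mass integrates to zero'' by the tower property is a non sequitur: the tower property controls average distortion, not rate, and mixing only ever \emph{lowers} rate relative to $\bI_{t+1}(\mc E;\tilde A_t)$---so even when it applies you would get $\le$, not $=$, and the feasibility issue is still unaddressed.

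The paper sidesteps the pointwise fight altogether. It never targets $\mc R_{t+1}(D)\le\bI_{t+1}(\mc E;\tilde A_t)$; instead it uses the trivially true pointwise bound $\mc R_{t+1}\bigl(\bE_{t+1}[d(\tilde A_t,\mc E)]\bigr)\le\bI_{t+1}(\mc E;\tilde A_t)$, valid because $\delta_t$ is automatically feasible at a threshold equal to its own achieved distortion. The passage from $D$ to $\bE_{t+1}[d(\tilde A_t,\mc E)]$ is then handled purely at the level of the rate-distortion function: replace $D$ by $\bE_t[d(\tilde A_t,\mc E)]\le D$ via monotonicity (Fact~\ref{fact:rdf}), rewrite this as $\bE_t\bigl[\bE_{t+1}[d(\tilde A_t,\mc E)]\bigr]$ by the tower property, and apply Jensen using convexity of $\mc R_{t+1}$ in its argument to push the inner expectation outside. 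No auxiliary channel or mixing construction is needed.
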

\begin{proof}
By definition, $\tilde{A}_t$ achieves the rate-distortion limit such that $\bE_t\left[d(\tilde{A}_t, \mc{E})\right] \leq D.$ Recall that, by Fact \ref{fact:rdf}, the rate-distortion function is a non-increasing function in its argument. This implies that for any $D_1 \leq D_2$, $\mc{R}_{t+1}(D_2) \leq \mc{R}_{t+1}(D_1)$. Applying this fact to the inequality above and taking expectations, we obtain $$\bE_t\left[\mc{R}_{t+1}(D)\right] \leq \bE_t\left[\mc{R}_{t+1}\left(\bE_t\left[d(\tilde{A}_t, \mc{E})\right]\right)\right].$$
Observe by the tower property of expectation that $$\bE_t\left[\mc{R}_{t+1}(D)\right] \leq \bE_t\left[\mc{R}_{t+1}\left(\bE_t\left[d(\tilde{A}_t, \mc{E})\right]\right)\right] = \bE_t\left[\mc{R}_{t+1}\left(\bE_t\left[\bE_{t+1}\left[d(\tilde{A}_t, \mc{E})\right]\right]\right)\right].$$
Moreover, from Fact \ref{fact:rdf}, we recall that the rate-distortion function is a convex function. Consequently, by Jensen's inequality, we have
$$\bE_t\left[\mc{R}_{t+1}(D)\right] \leq \bE_t\left[\mc{R}_{t+1}\left(\bE_t\left[d(\tilde{A}_t, \mc{E})\right]\right)\right] = \bE_t\left[\mc{R}_{t+1}\left(\bE_t\left[\bE_{t+1}\left[d(\tilde{A}_t, \mc{E})\right]\right]\right)\right] \leq \bE_t\left[\mc{R}_{t+1}\left(\bE_{t+1}\left[d(\tilde{A}_t, \mc{E})\right]\right)\right].$$
Inspecting the definition of the rate-distortion in the expectation, we see that 
$$\mc{R}_{t+1}(D) = \inf\limits_{p(\widetilde{A} \mid \mc{E})} \bI_{t+1}(\mc{E}; \widetilde{A}) \text{ such that } \bE_{t+1}\left[d(\widetilde{A}, \mc{E})\right] \leq D,$$ which immediately implies $$\mc{R}_{t+1}\left(\bE_{t+1}\left[d(\tilde{A}_t, \mc{E})\right]\right) \leq \bI_{t+1}(\mc{E}; \tilde{A}_t).$$ Substituting back into the earlier expression, we have
$$\bE_t\left[\mc{R}_{t+1}(D)\right] \leq \bE_t\left[\bI_{t+1}(\mc{E}; \tilde{A}_t)\right] = \bE_t\left[\bI_{t}(\mc{E}; \tilde{A}_t \mid A_t, O_{t+1})\right] = \bI_{t}(\mc{E}; \tilde{A}_t \mid A_t, O_{t+1}).$$
We now apply Fact \ref{fact:info_gain} to arrive at $$\bE_t\left[\mc{R}_{t+1}(D)\right] \leq \bI_{t}(\mc{E}; \tilde{A}_t \mid A_t, O_{t+1}) = \bI_{t}(\mc{E}; \tilde{A}_t) - \bI_t(\tilde{A}_t; (A_t, O_{t+1})).$$

Since, by definition, $\tilde{A}_t$ achieves the rate-distortion limit at time period $t$, we know that $\bI_t(\mc{E}; \tilde{A}_t) = \mc{R}_t(D).$ Applying this fact yields the desired inequality: $$\bE_t\left[\mc{R}_{t+1}(D)\right] \leq \bI_{t}(\mc{E}; \tilde{A}_t) - \bI_t(\tilde{A}_t; (A_t, O_{t+1})) = \mc{R}_t(D) - \bI_t(\tilde{A}_t; (A_t, O_{t+1})).$$
\end{proof}

Lemma \ref{lemma:exp_rdf_ub} shows that the expected amount of information needed from the environment in each successive time period is non-increasing and further highlights two possible sources for this improvement: (1) a change in learning target from $\tilde{A}_t$ to $\tilde{A}_{t+1}$ and (2) information acquired about $\tilde{A}_t$ in the current time period, $\bI_t(\tilde{A}_t; (A_t,O_{t+1}))$. With this in hand, we can obtain control over the cumulative information gain of an agent across all time periods using the learning target identified under our prior, following an identical argument as \citet{arumugam2022deciding}.

\begin{lemma}
For any fixed $D > 0$ and any $t \in [T]$, $$\bE_t\left[\sum\limits_{t'=t}^T \bI_{t'}(\tilde{A}_{t'}; (A_{t'}, O_{t'+1}))\right] \leq \mc{R}_t(D).$$
\label{lemma:cum_info_bound}
\end{lemma}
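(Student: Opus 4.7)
The plan is to use Lemma \ref{lemma:exp_rdf_ub} as a per-step inequality and then telescope. Specifically, for any $t' \in [T]$, Lemma \ref{lemma:exp_rdf_ub} tells us
$$\bI_{t'}(\tilde{A}_{t'}; (A_{t'}, O_{t'+1})) \leq \mc{R}_{t'}(D) - \bE_{t'}\left[\mc{R}_{t'+1}(D)\right],$$
so the information acquired about the current learning target in a single step is bounded by the one-step decrease in the rate-distortion function. Summing this bound over $t' = t, t+1, \ldots, T$ gives a telescoping sum on the right-hand side.

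To make the telescoping go through cleanly once I move inside an outer $\bE_t[\cdot]$, I would apply $\bE_t$ to both sides of the per-step bound. Using the tower property $\bE_t\left[\bE_{t'}[\cdot]\right] = \bE_t[\cdot]$ for any $t' \geq t$ (since $H_{t'}$ contains $H_t$), the conditional expectation $\bE_{t'}\left[\mc{R}_{t'+1}(D)\right]$ collapses to $\bE_t\left[\mc{R}_{t'+1}(D)\right]$ when averaged under $\bE_t$. Then summing over $t'$ produces
$$\sum_{t'=t}^{T} \bE_t\left[\bI_{t'}(\tilde{A}_{t'}; (A_{t'}, O_{t'+1}))\right] \leq \bE_t\left[\mc{R}_{t}(D)\right] - \bE_t\left[\mc{R}_{T+1}(D)\right].$$
Since $\mc{R}_t(D)$ is a deterministic function of $H_t$, $\bE_t\left[\mc{R}_t(D)\right] = \mc{R}_t(D)$; and by Fact \ref{fact:rdf}, $\mc{R}_{T+1}(D) \geq 0$, so dropping the final (non-positive) term only loosens the bound. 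A final swap of expectation and finite sum on the left-hand side yields the claim.

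There is no real obstacle here beyond bookkeeping: the genuine analytical content lives in Lemma \ref{lemma:exp_rdf_ub} (convexity and monotonicity of the rate-distortion function together with the chain-rule identity in Fact \ref{fact:info_gain}), while the present lemma is essentially its telescoped consequence. The one subtlety to be careful about is the use of the tower property to peel off $\bE_{t'}$ inside $\bE_t$, which is justified precisely because $H_t \subseteq H_{t'}$ for $t' \geq t$; and the observation that $\mc{R}_t(D)$, though in general a random variable (being a functional of the posterior $\eta_t$), is measurable with respect to $H_t$ and hence comes out of $\bE_t$ as a constant.
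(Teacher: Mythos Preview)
Your proposal is correct and follows essentially the same approach as the paper: rearrange Lemma~\ref{lemma:exp_rdf_ub} into a per-step bound $\bI_{t'}(\tilde{A}_{t'};(A_{t'},O_{t'+1})) \leq \mc{R}_{t'}(D) - \bE_{t'}[\mc{R}_{t'+1}(D)]$, push everything inside $\bE_t$, collapse $\bE_t\bE_{t'}$ via the tower property, telescope, and drop the nonnegative terminal term. The paper carries out the telescoping by splitting the two sums and re-indexing, whereas you state the telescoped identity directly, but the argument is the same.
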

\begin{proof}
Observe that we can apply Lemma \ref{lemma:exp_rdf_ub} directly to each term of the sum and obtain $$\bE_t\left[\sum\limits_{t'=t}^T \bI_{t'}(\tilde{A}_{t'}; (A_{t'}, O_{t'+1}))\right] \leq \bE_t\left[\sum\limits_{t'=t}^T \left(\mc{R}_{t'}(D) - \bE_{t'}\left[\mc{R}_{t'+1}(D)\right]\right) \right].$$ Applying linearity of expectation and breaking apart the sum, we have
\begin{align*}
    \bE_t\left[\sum\limits_{t'=t}^T \bI_{t'}(\tilde{A}_{t'}; (A_{t'}, O_{t'+1}))\right] &\leq \bE_t\left[\sum\limits_{t'=t}^T \left(\mc{R}_{t'}(D) - \bE_{t'}\left[\mc{R}_{t'+1}(D)\right]\right) \right] \\
    &= \sum\limits_{t'=t}^T\bE_t\left[\mc{R}_{t'}(D)\right] - \sum\limits_{t'=t}^T \bE_t\left[\bE_{t'}\left[\mc{R}_{t'+1}(D)\right]\right] \\
    &\leq \sum\limits_{t'=t}^T\bE_t\left[\mc{R}_{t'}(D)\right] - \sum\limits_{t'=t}^{T-1} \bE_t\left[\bE_{t'}\left[\mc{R}_{t'+1}(D)\right]\right] \\
    &= \bE_t\left[\mc{R}_t(D)\right] + \sum\limits_{t'=t+1}^T\bE_t\left[\mc{R}_{t'}(D)\right] - \sum\limits_{t'=t}^{T-1} \bE_t\left[\bE_{t'}\left[\mc{R}_{t'+1}(D)\right]\right] \\
    &= \mc{R}_t(D) + \sum\limits_{t'=t+1}^T\bE_t\left[\mc{R}_{t'}(D)\right] - \sum\limits_{t'=t}^{T-1} \bE_t\left[\bE_{t'}\left[\mc{R}_{t'+1}(D)\right]\right].
\end{align*}
We may complete the proof by applying the tower property of expectation and then re-indexing the last summation
\begin{align*}
    \bE_t\left[\sum\limits_{t'=t}^T \bI_{t'}(\tilde{A}_{t'}; (A_{t'}, O_{t'+1}))\right] &\leq \mc{R}_t(D) + \sum\limits_{t'=t+1}^T\bE_t\left[\mc{R}_{t'}(D)\right] - \sum\limits_{t'=t}^{T-1} \bE_t\left[\bE_{t'}\left[\mc{R}_{t'+1}(D)\right]\right] \\
    &= \mc{R}_t(D) + \sum\limits_{t'=t+1}^T\bE_t\left[\mc{R}_{t'}(D)\right] - \sum\limits_{t'=t}^{T-1} \bE_t\left[\mc{R}_{t'+1}(D)\right] \\
    &= \mc{R}_t(D) + \sum\limits_{t'=t+1}^T\bE_t\left[\mc{R}_{t'}(D)\right] - \sum\limits_{t'=t+1}^T \bE_t\left[\mc{R}_{t'}(D)\right] \\
    &= \mc{R}_t(D).
\end{align*}
\end{proof}

With all of these tools in hand, we may now establish an information-theoretic regret bound. For each time period $t \in [T]$, define the information ratio as $$\Gamma_t \triangleq \frac{\bE_t\left[\overline{\rho}(\tilde{A}_t) - \overline{\rho}(A_t)\right]^2}{\bI_t(\tilde{A}_t; (A_t, O_{t+1}))}.$$ 

\begin{theorem}
For any $D > 0$, if $\forall t \in [T]$ $\Gamma_t \leq \overline{\Gamma} < \infty$, then $$\bE\left[\sum\limits_{t=1}^T (\overline{\rho}(A^\star) - \overline{\rho}(A_t))\right] \leq \sqrt{\overline{\Gamma}T\mc{R}_1(D)} + T\sqrt{D}.$$
\label{thm:rdf_regret_bound}
\end{theorem}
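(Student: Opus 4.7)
The plan is to establish the regret bound via the standard information-ratio decomposition, adapted to the target-action framework. I would first split the per-period instantaneous regret into two pieces by inserting the target action:
\begin{equation*}
\bE_t\left[\overline{\rho}(A^\star) - \overline{\rho}(A_t)\right] = \underbrace{\bE_t\left[\overline{\rho}(A^\star) - \overline{\rho}(\widetilde{A}_t)\right]}_{\text{distortion term}} + \underbrace{\bE_t\left[\overline{\rho}(\widetilde{A}_t) - \overline{\rho}(A_t)\right]}_{\text{rate term}}.
\end{equation*}
The first term measures how much performance is sacrificed by aiming at the compressed learning target $\widetilde{A}_t$ instead of $A^\star$, and should be controlled purely through the distortion constraint $D$; the second term measures how much regret is incurred while the agent is still learning $\widetilde{A}_t$, and should be controlled through the information ratio and the cumulative-information bound of Lemma \ref{lemma:cum_info_bound}.

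For the distortion term, I would use the choice of distortion function $d(\tilde{a}, \theta) = \bE_t[(\overline{\rho}(A^\star) - \overline{\rho}(\tilde{a}))^2 \mid \mc{E} = \theta]$ together with the rate-distortion constraint $\bE_t[d(\widetilde{A}_t, \mc{E})] \leq D$. The tower property collapses this to $\bE_t[(\overline{\rho}(A^\star) - \overline{\rho}(\widetilde{A}_t))^2] \leq D$, and Jensen's inequality then yields $\bE_t[\overline{\rho}(A^\star) - \overline{\rho}(\widetilde{A}_t)] \leq \sqrt{D}$. Summing over $t \in [T]$ and taking an outer expectation contributes the $T\sqrt{D}$ term in the bound.

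For the rate term, I would invoke the definition of the information ratio to write $\bE_t[\overline{\rho}(\widetilde{A}_t) - \overline{\rho}(A_t)] = \sqrt{\Gamma_t \cdot \bI_t(\widetilde{A}_t; (A_t, O_{t+1}))} \leq \sqrt{\overline{\Gamma} \cdot \bI_t(\widetilde{A}_t; (A_t, O_{t+1}))}$ using the uniform bound on $\Gamma_t$. Summing over $t$ and applying the Cauchy-Schwarz inequality to the sum of square roots gives
\begin{equation*}
\sum_{t=1}^T \sqrt{\overline{\Gamma} \cdot \bI_t(\widetilde{A}_t; (A_t, O_{t+1}))} \leq \sqrt{\overline{\Gamma} \cdot T \cdot \sum_{t=1}^T \bI_t(\widetilde{A}_t; (A_t, O_{t+1}))}.
\end{equation*}
Taking an outer expectation and applying Jensen (since $\sqrt{\cdot}$ is concave) moves the expectation inside the root, and then Lemma \ref{lemma:cum_info_bound} applied at $t=1$ bounds the expected cumulative information gain by $\mc{R}_1(D)$. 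This produces the $\sqrt{\overline{\Gamma} T \mc{R}_1(D)}$ term.

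The main obstacle I expect is the careful bookkeeping of nested conditional expectations when moving between the per-period bound (conditioned on $H_t$) and the overall Bayesian regret (an unconditional expectation), especially in correctly applying Jensen's inequality to pull the expectation inside the square root in both places without loosening the bound unnecessarily. Once that is handled cleanly, combining the two pieces via the triangle inequality gives the desired bound; everything else reduces to routine manipulation using the lemmas already proved.
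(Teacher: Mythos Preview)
Your proposal is correct and follows essentially the same route as the paper's proof: the same two-term decomposition via the target action $\widetilde{A}_t$, the same use of Jensen with the squared-regret distortion to obtain $T\sqrt{D}$, and the same information-ratio argument combined with Cauchy--Schwarz, Jensen, and Lemma~\ref{lemma:cum_info_bound} to obtain $\sqrt{\overline{\Gamma}T\mc{R}_1(D)}$. The only cosmetic difference is that the paper carries out the decomposition on the summed regret directly rather than per period, and the final combination is by linearity of expectation rather than a triangle inequality.
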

\begin{proof}
First, we establish a simple regret decomposition $$\bE\left[\sum\limits_{t=1}^T (\overline{\rho}(A^\star) - \overline{\rho}(A_t))\right] = \bE\left[\sum\limits_{t=1}^T (\overline{\rho}(A^\star) - \overline{\rho}(\tilde{A}_t) + \overline{\rho}(\tilde{A}_t) - \overline{\rho}(A_t))\right] = \bE\left[\sum\limits_{t=1}^T (\overline{\rho}(A^\star) - \overline{\rho}(\tilde{A}_t))\right] + \bE\left[\sum\limits_{t=1}^T (\overline{\rho}(\tilde{A}_t) - \overline{\rho}(A_t))\right],$$ where the first term captures our cumulative performance shortfall by pursuing a learning target $\tilde{A}_t$ in each time period, rather than $A^\star$, while the second term captures our regret with respect to each target. The latter term is also known as the satisficing regret~\citep{russo2022satisficing}. Focusing on the first term, we may apply the tower property of expectation to leverage the fact that each target action $\tilde{A}_t$ achieves the rate-distortion limit and, therefore, has bounded expected distortion:
\begin{align*}
    \bE\left[\sum\limits_{t=1}^T (\overline{\rho}(A^\star) - \overline{\rho}(\tilde{A}_t))\right] &= \bE\left[\sum\limits_{t=1}^T \bE_t\left[\overline{\rho}(A^\star) - \overline{\rho}(\tilde{A}_t)\right]\right] \\
    &= \bE\left[\sum\limits_{t=1}^T \bE_t\left[\big|\overline{\rho}(A^\star) - \overline{\rho}(\tilde{A}_t)\big|\right]\right] \\
    &= \bE\left[\sum\limits_{t=1}^T \bE_t\left[\sqrt{\left(\overline{\rho}(A^\star) - \overline{\rho}(\tilde{A}_t)\right)^2}\right]\right] \\
    &\leq \bE\left[\sum\limits_{t=1}^T \sqrt{\bE_t\left[\left(\overline{\rho}(A^\star) - \overline{\rho}(\tilde{A}_t)\right)^2\right]}\right] \\
    &= \bE\left[\sum\limits_{t=1}^T \sqrt{\bE_t\left[d(\tilde{A}_t, \mc{E})\right]}\right] \\
    &\leq \bE\left[\sum\limits_{t=1}^T \sqrt{D}\right] \\
    &= T\sqrt{D},
\end{align*}
where the first inequality is due to Jensen's inequality. So, in total, we have established that $$\bE\left[\sum\limits_{t=1}^T (\overline{\rho}(A^\star) - \overline{\rho}(A_t))\right] = \bE\left[\sum\limits_{t=1}^T (\overline{\rho}(A^\star) - \overline{\rho}(\tilde{A}_t))\right] + \bE\left[\sum\limits_{t=1}^T (\overline{\rho}(\tilde{A}_t) - \overline{\rho}(A_t))\right] \leq \bE\left[\sum\limits_{t=1}^T (\overline{\rho}(\tilde{A}_t) - \overline{\rho}(A_t))\right] + T\sqrt{D}.$$

The remainder of the proof follows as a standard information-ratio analysis~\citep{russo2016information}, only now with the provision of Lemma \ref{lemma:cum_info_bound}. Namely, we have
\begin{align*}
    \bE\left[\sum\limits_{t=1}^T (\overline{\rho}(\tilde{A}_t) - \overline{\rho}(A_t))\right] &= \bE\left[\sum\limits_{t=1}^T \bE_t\left[\overline{\rho}(\tilde{A}_t) - \overline{\rho}(A_t)\right]\right] \\
    &= \bE\left[\sum\limits_{t=1}^T \sqrt{\Gamma_t \bI_t(\tilde{A}_t; (A_t, O_{t+1})}\right] \\
    &\leq \sqrt{\overline{\Gamma}} \bE\left[\sum\limits_{t=1}^T \sqrt{\bI_t(\tilde{A}_t; (A_t, O_{t+1})}\right] \\
    &\leq \sqrt{\overline{\Gamma}T \bE\left[\sum\limits_{t=1}^T \bI_t(\tilde{A}_t; (A_t, O_{t+1})\right]} \\
    &\leq \sqrt{\overline{\Gamma}T \mc{R}_1(D)},
\end{align*}
where the first inequality follows from our uniform upper bound to the information ratios, the second inequality is the Cauchy-Scwharz inequality, and the final inequality is due to Lemma \ref{lemma:cum_info_bound}. Putting everything together, we have established that $$\bE\left[\sum\limits_{t=1}^T (\overline{\rho}(A^\star) - \overline{\rho}(A_t))\right] \leq \sqrt{\overline{\Gamma}T \mc{R}_1(D)} + T\sqrt{D}.$$
Theorem 1 then follows by Proposition 3 of \citet{russo2016information}, which establishes that $\overline{\Gamma} = \frac{1}{2}|\mc{A}|$ for a multi-armed bandit problem with rewards bounded in the unit interval and a finite action space.
\end{proof}

\end{document}